\documentclass{article}

\PassOptionsToPackage{numbers, compress}{natbib}

\usepackage[final]{neurips_2023}
\usepackage{style}
\usepackage{tcolorbox}
\usepackage{subfigure}




\usepackage[utf8]{inputenc} 
\usepackage[T1]{fontenc}    
\usepackage{hyperref}       
\usepackage{url}            
\usepackage{booktabs}       
\usepackage{amsfonts}       
\usepackage{nicefrac}       
\usepackage{microtype}      
\usepackage{xcolor}         

\title{ResMem: Learn what you can and memorize the rest}

%

\author{%
  Zitong Yang \\
  Stanford University\\
  Stanford, CA 94305 \\
  \texttt{zitong@berkeley.edu} \\
  \And
  Michal Lukasik \\
  Google Research \\
  New York, NY, 10011 \\
  \texttt{mlukasik@google.com} \\
  \And
  Vaishnavh Nagarajan \\
  Google Research \\
  New York, NY, 10011 \\
  \texttt{vaishnavh@google.com} \\
  \And
  Zonglin Li \\
  Google Research \\
  New York, NY, 10011 \\
  \texttt{lizonglin@google.com} \\
  \And
  Ankit Singh Rawat \\
  Google Research \\
  New York, NY, 10011 \\
  \texttt{ankitsrawat@google.com} \\
  \And
  Manzil Zaheer \\
  Google Research \\
  New York, NY, 10011 \\
  \texttt{manzilzaheer@google.com} \\
  \And
  Aditya Krishna Menon \\
  Google Research \\
  New York, NY, 10011 \\
  \texttt{adityakmenon@google.com} \\
  \And
  Sanjiv Kumar \\
  Google Research \\
  New York, NY, 10011 \\
  \texttt{sanjivk@google.com} \\
}

\begin{document}

\maketitle

\begin{abstract}
    The impressive generalization performance of modern neural networks is attributed in part to 
their ability to \emph{implicitly} memorize complex training patterns.
Inspired by this, we explore a novel mechanism to improve model generalization via \emph{explicit} memorization.
Specifically, we propose the \emph{residual-memorization} (\emph{ResMem}) algorithm, a new method that augments an existing prediction model (e.g., a neural network) by fitting the model's residuals with a $k$-nearest neighbor based regressor.
The final prediction is then the sum of the original model and the fitted residual regressor.
By construction, ResMem can explicitly memorize the training labels, even when the base model has low capacity.
We start by formulating a stylized linear regression problem and rigorously show that ResMem results in a more favorable test risk over a base linear neural network.
Then, we empirically show that ResMem consistently improves the test set generalization of the original prediction model across standard vision and natural language processing benchmarks.
\end{abstract}

\section{Introduction}
Large neural networks achieve remarkable \emph{generalization} on test samples 
despite \emph{memorization} 
of training samples, in the sense of achieving zero training error~\citep{Zhang:2017}.
Several recent analyses have established that, under certain settings, memorization is \emph{sufficient} to achieve generalization~\citep{Bartlett:2017,
Dziugaite:2017,Belkin:2018,
Neyshabur:2019,Bartlett:2020},
and, more surprisingly, 
can even be \emph{necessary}~\citep{Feldman2020, FeldmanZhang2020, CJK2022Mem}.
These works suggest that suitable memorization can be a valuable desiderata for learning.
While increasing model size is a conceptually simple strategy to enable memorization, this has the obvious downside
of significantly increasing the cost of model training and serving.
This raises a natural question:
\emph{are there alternate mechanisms to improve the memorization (and thus generalization) of a relatively small model?}

\begin{figure*}[t]
    \centering
    \subfigure[\label{fig:cartoon1} \textbf{Step 1:} learn the training set.]{\includegraphics[width=.32\textwidth]{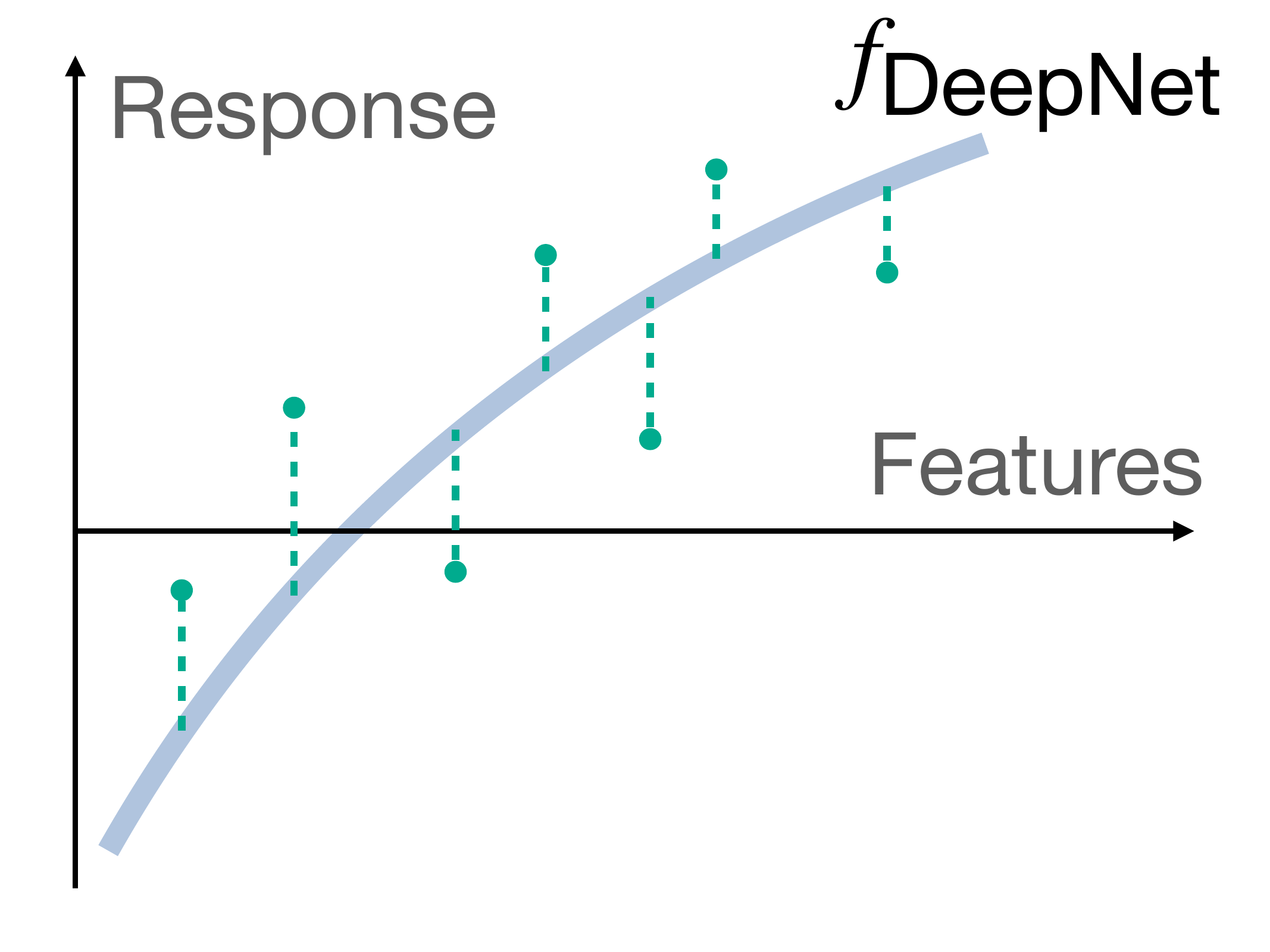}}
    \subfigure[\label{fig:cartoon2}\textbf{Step 2:} compute the residual.]{\includegraphics[width=.32\textwidth]{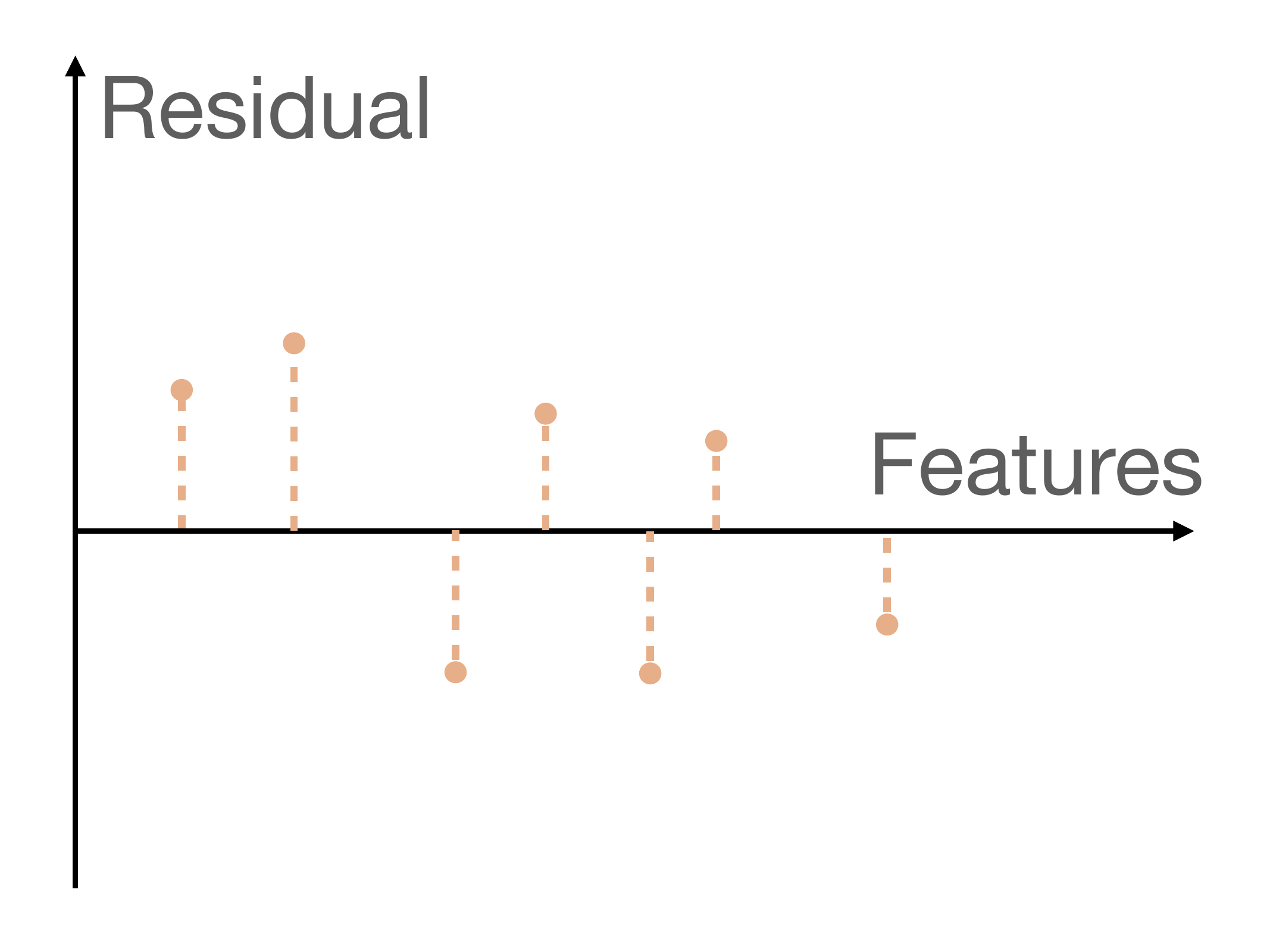}}
    \subfigure[\label{fig:cartoon3}\textbf{Step 3:} memorize the residual.]{\includegraphics[width=.32\textwidth]{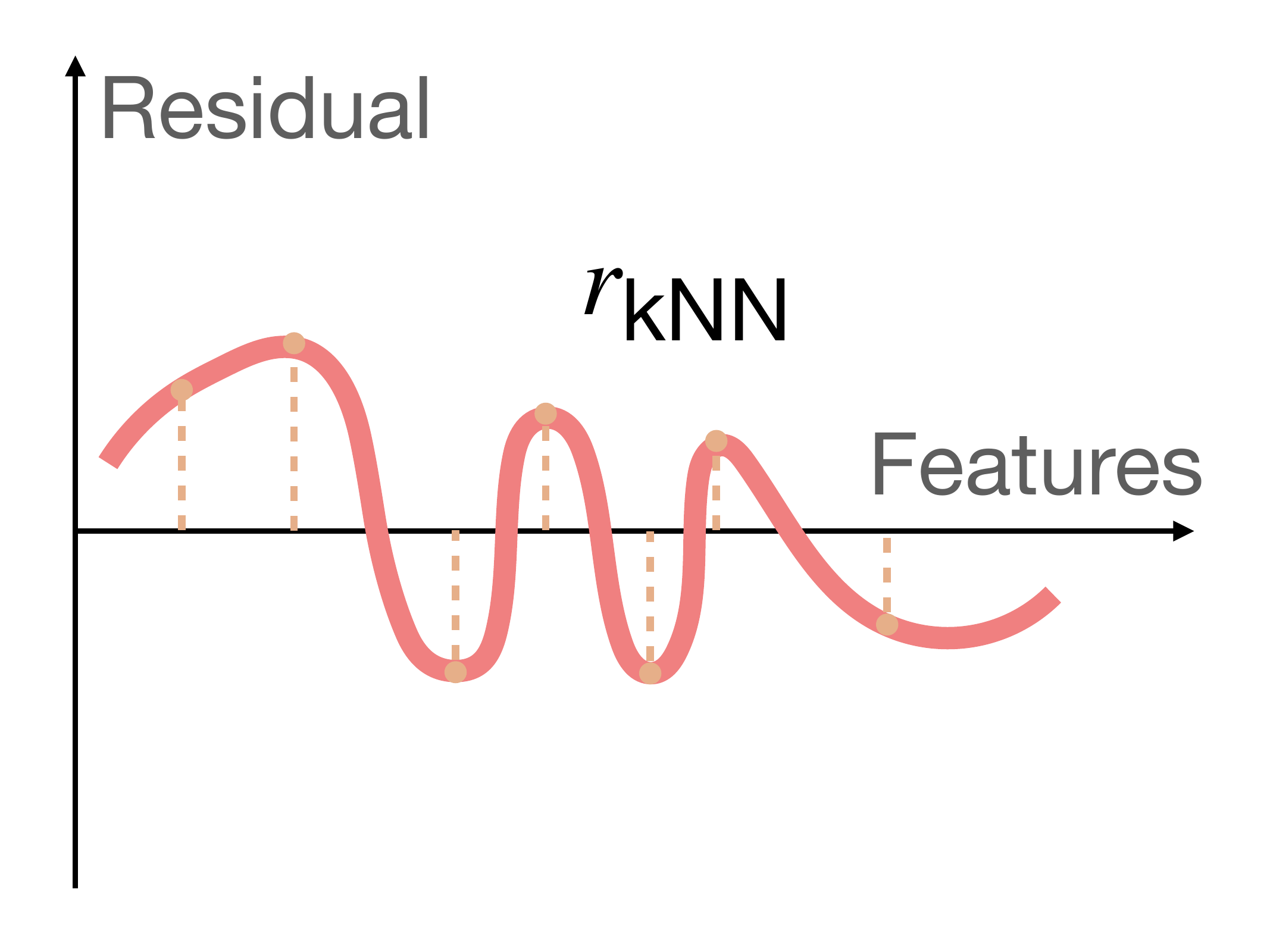}}
    \caption{Illustration of the \emph{residual memorization} (ResMem) algorithm.
    In a nutshell, we first fit a small deep network $f_{\sf DeepNet}$ on the training sample (Figure \ref{fig:cartoon1}).
    When this network is non-memorizing, it incurs non-zero \emph{residual} errors in its predictions (Figure \ref{fig:cartoon2}).
    We then fit a $k$-nearest neighbor based regressor on these residuals (Figure \ref{fig:cartoon3}).
    The final prediction is given by the sum of the initial network and $k$-NN regressor predictions.
    In all three figures, the $x$-axis represents the features in a supervised learning problem.
    In Figure \ref{fig:cartoon1}, the $y$-axis represents the targets of prediction.
    In Figure \ref{fig:cartoon2} and \ref{fig:cartoon3}, the $y$-axis represents the residual of the initial fitting from \textbf{Step 1}.}
    \label{fig:cartoon}
\end{figure*}

In this paper, we propose \emph{residual memorization} (\emph{ResMem}), a simple yet effective mechanism that achieves this goal (cf.~Figure \ref{fig:cartoon}).
Compared to the \emph{implicit} memorization performed by large neural models, the key idea behind ResMem is to perform \emph{explicit} memorization via a separate $k$-nearest neighbor component.
Specifically, ResMem involves first training a standard neural network $f_\nn$, 
and then  explicitly \emph{memorizing the model's residuals}
with a $k$-nearest neighbor based regressor $r_{\knn}$.
Memorization through $k$-nearest neighbor can be efficiently computed with various approximation schemes (e.g. \cite{avq_2020}).
Subsequently, the ResMem prediction on an instance $x$ is given by the sum of the two components, i.e., $f_\nn(x) + r_\knn(x)$.

We start by formulating a stylized linear regression problem that captures the essence behind ResMem~(cf.~Section \ref{sec:thy}).
Our analysis (Theorem \ref{thm:test-loss-ltm}) shows that, without ResMem, the test risk of the base linear neural network decreases to an irreducible constant as the number of samples goes to infinity.
In contrast, the test risk of ResMem decreases to zero.
The insight of theoretical analysis is that ResMem augments the capacity of the parametric linear network by adding a non-parametric component (i.e., nearest-neighbor).\\

Empirically, we show that such explicit memorization indeed leads to generalization benefits:
ResMem consistently improves the test accuracy of a baseline DeepNet on image classification tasks with CIFAR100~\citep{cifar}, and autoregressive language modeling on C4~\citep{2020t5} (Section \ref{sec:exp}).
Towards understanding this improved performance, we hypothesize that ResMem works by learning in two-stages~(cf.~Section \ref{sec:emp-ana}).
Specifically, we posit that the initial DeepNet $f_\nn$ learns some \emph{coarse} structure, and ResMem $r_\knn$ supplements 
the DeepNet prediction with \emph{fine-grained} details (cf. Figure \ref{fig:cifar-nlp-analysis}).
We verify our hypothesis via qualitative analysis on CIFAR100 and C4 (Section~\ref{sec:emp-ana}).

\noindent To summarize, our contributions are:
\begin{enumerate}[label=(\arabic*), leftmargin=16pt]
\item We propose \emph{residual-memorization} (\emph{ResMem}), a two-stage learning algorithm that combines a base prediction model with a nearest neighbor regressor (cf.~Figure \ref{fig:cartoon});
\item We theoretically analyze the rate of convergence of ResMem on a stylized linear regression problem, and show that it can improve upon the base prediction model (Section~\ref{sec:thy}).
\item We empirically demonstrate that ResMem improves test performance of neural networks (cf.~Section~\ref{sec:exp}), particularly when the training set is extremely large;
\end{enumerate}

\subsection{Applicable scenarios of ResMem}
\label{sec:applicable}

From our theoretical and empirical analysis, we posit that ResMem (Figure \ref{fig:cartoon}) yields the largest margin of improvement over a base DeepNet when it is infeasible to perform \emph{implicit} memorization with the latter.
We discuss three such scenarios below.
Each of our main empirical or theoretical results roughly corresponds to one of these settings.

\begin{itemize}[leftmargin=5mm, itemsep=1mm, partopsep=0pt,parsep=0pt]
    \item \textbf{Complex dataset.} In this scenario, the Bayes-optimal decision boundary is very complex, and is beyond the capability of the neural network itself. 
    To demonstrate this, we analyze a theoretical linear regression problem where the target regression function is not contained in the hypothesis class of linear neural networks (cf.~Section \ref{sec:thy}).

    \item \textbf{Large sample size.} Here, the number of training samples is large enough to make training set interpolation (i.e., achieving zero training error) infeasible for a given neural network model.
    For example, current large language models (LLMs) 
    may be trained
    for at most a single epoch over trillions of examples~\citep{Chowdhery:2022}.    
    By contrast, ResMem can circumvent this issue by explicitly memorizing the training samples. 
    We emulate this scenario by considering a causal language modeling task on the C4 dataset (cf.~Section \ref{sec:exp-nlp}).   

    \item \textbf{Small model.} In many practical settings, one may prefer a smaller model over a state-of-the-art model due to the training and deployment cost constraints.
    We emulate such a setting through an image classification task where it is indeed feasible to memorize the training data perfectly using state-of-the-art neural networks, but instead, we use smaller neural networks for computational efficiency (cf. Section \ref{sec:exp-vision}).
\end{itemize}

%
\section{Related work}
\label{sec:relatedwork}

We discuss two types of related work: Section \ref{sec:relatedwork-mem-gen} for literature on memorization and generalization that motivates the ResMem algorithm; Section \ref{sec:relatedwork-other-alg} for other related algorithms similar to ResMem.

\subsection{Memorization for generalization: prior work}
\label{sec:relatedwork-mem-gen}
\paragraph{Memorization is compitable for generalization.}
Overparameterized neural models with many more parameters than training samples have the capacity to perfectly fit (or \emph{interpolate}) even random training labels~\citep{Zhang:2017};
i.e., they can drive the empirical loss to zero for \emph{any} training set.
At the same time, when trained on real-world datasets, 
increasing model complexity tends to \emph{improve} model performance~\citep{Neyshabur:2019, pmlr-v119-yang20j};
that is, the models do not simply memorize the training sample, but rather learn generalizable patterns.
Several works have sought to understand the reasons behind this behaviour, both empirically~\citep{Arpit:2017} and theoretically~\citep{Bartlett:2017,Dziugaite:2017,Brutzkus:2018,Belkin:2018,Neyshabur:2019,Liang:2020,Montanari:2020,Bartlett:2020,Vapnik:2021,Wang:2021b, pmlr-v139-yang21a}.
One recurring message from the theory is that memorization (in the form of interpolation) can be sufficient for generalization.

\paragraph{Memorization can be necessary for generalization.}
Some recent works~\citep{Feldman2020,CJK2022Mem} showed that memorization
---
either in the sense of interpolation, or in a more general sense of stability~\citep{Feldman2020_DiscoveringLT}
---
may be \emph{necessary} for generalization.
\citet{Feldman:2019} considered a setting where the label distribution exhibits a \emph{long-tailed} distribution, and showed that to prevent incurring a large error on the large number of under-represented classes, it may be necessary to memorize many of their associated training samples.
\citet{CJK2022Mem} considered a linear regression setting where it is beneficial to fit the training targets to error lower than the Bayes-error (i.e., the inherent noise in the targets).

\subsection{Relation to existing algorithms}
\label{sec:relatedwork-other-alg}

\paragraph{Nearest neighbor method.} The $k$-nearest neighbor ($k$-NN)~\citep{NNCover, KnuthAOP, Hastie:2001, locallearning} method assigns label to a test sample based on the label of its nearest neighbor(s) in the training set.
Owing to its \emph{simplicity}, \emph{flexibility} in defining input similarity, and \emph{computational efficiency} with various approximation schemes~\citep{Gionis1999SimilaritySI, Muja2009FastAN}, this method remains popular.
However, the performance of $k$-NN drops as data becomes high dimensional~\citep{nnrates, Muja2009FastAN}.
Therefore, to apply it to high dimensional data such as image and text~\citep{knnsvm}, one approach is to learn a representation of data using neural networks~\citep{pmlr-v2-salakhutdinov07a}.
Following this approach, \citep{arXivDnnKnn} finds that applying $k$-NN directly to memorize the training labels $y_i$ yields similar performance with the original softmax based neural network classification.
In contrast, ResMem applies $k$-NN to memorize the \emph{residual} $r_i$ over the predictions of a base network.

%
\paragraph{Boosting and residual fitting.}
Boosting algorithms such as AdaBoost~\citep{Freund:1997} seek to construct an ensemble of ``weak learner'' models with good generalization.
AdaBoost achieves this in an iterative manner, and can be interpreted as a particular instantiation of \emph{forward stage-wise regression}~\citep{Friedman:2000}, a classical procedure from statistics~\citep{Goldberger:1961,Alley:1987,Tibshirani:2015}.
Intuitively, at each round, one builds a new weaker learner by fitting the residual of the ensemble of weak learners constructed thus far.
This fitting is performed iteratively.
ResMem can be loosely regarded as a two round boosting algorithm where the first ``weak learner'' is the base neural network and the second ``weak learner'' is the nearest-neighbor component.
Note that there is no need for the thrid ``weak learner'', because the nearest-neighbor component already perfectly memorizes the neural network residuals.

\paragraph{Memory-augmented language models.}
In the language modelling literature, several works explore combining neural models with an external database or memory, which can be queried to retrieve additional context~\citep{Lample:2019,Guu:2020,Borgeaud:2021,li2022decoupled}.
Closer to our work,~\citet{Khandelwal:2020} employ a linear combination of neural network and $k$-NN classifier components.
However, a crucial difference is that our $k$-NN components memorizes the \emph{residual} of the DeepNet prediction, whereas \citet{Khandelwal:2020} memorizes the \emph{target label} directly;
i.e., their approach is akin to an ensemble of $k$-NN and a deep network.
Various forms of memory have also been considered in generic classification problems~\citep{Panigrahy:2021,Vapnik:2021,Wang:2022}.
This line of literature again differs from ResMem in that their memory tries to memorize labels directly, whereas ResMem memorizes the \emph{residuals}, leading to a natural combination of the neural network and the memory component.

\paragraph{Model compression for large neural networks.}
Since ResMem boosts the test accuracy of a small, non-memorizing neural network, we can also view it as a technique that allows a small network to match the performance of a larger one.
This relates to the model compression literature.
Distillation~\citep{Hinton:2015, Bucilua:2006} is a popular strategy for compressing a large neural model to a smaller one.
For a survey of other effective strategies, including pruning, see~\citet{Menghani:2021}.
In Appendix \ref{sec:sensitivity-cifar}, we discuss how ResMem can be regarded as a ``dual procedure'' of distillation.

\section{Theoretical results}
\label{sec:thy}
As discussed in Section \ref{sec:applicable}, ResMem yields the largest improvement when implicit memorization is infeasible.
In this section, we formulate (cf. Section \ref{sec:thy-formulation}) and analyze (cf. Theorem~\ref{thm:test-loss-ltm}) a stylized linear regression problem that concretizes such a setting.

Recall that ResMem (Figure \ref{fig:cartoon}) involves first training a base neural network $f_\nn$, and then fitting the residual of $f_\nn$ on the same training data using a nearest-neighbor regressor $r_\knn$.
For feasibility of theoretical analysis, we simplify $f_\nn$ with a single layer linear neural network, i.e. linear regression, and we consider $1$-nearest neighbor instead of $k$-nearest neighbor to memorize the residual of this network.
Our results suggests that ResMem improves test-time generalization by augmenting the capacity of the base model with a non-parametric nearest-neighbor component.

\subsection{Assumptions and setting}
\label{sec:thy-formulation}

In this section, we present the setup and assumptions for the stylized linear regression problem.
We consider a setting where the function class that we minimize over does \emph{not} include the ground-truth function that relates the covariates to the response.
Therefore, even with infinite samples, the test loss will decay to a positive constant.
We exactly characterize the rate of decay, and show that it converges to $0$ under ResMem. Our analysis rests on the following assumptions.

\begin{assumption}[Distribution of covariates]
\label{ass:lin-dist}
The distribution of covariate $\bx\in\R^d$, denoted by $\P_\bx$, is the uniform distribution\footnote{For more general distributions, the theoretical result will depend on quantities like $\P_\bx(\cB(\bxt, h))$, where $\cB(\bxt, h)$ means a ball of radius $h$ that is centered at $\bxt$. We took uniform distribution for simplicity and to obtain exact dependence on $d$.} over a Euclidean norm ball centered at the origin of radius $\sqrt{d+2}$. The choice of radius ensures that $\E_{\bx\sim\P_\bx} \bx\bx^\sT = \bI$.
\end{assumption}

\begin{assumption}[Linear regression over norm ball]
\label{ass:lin-prob-def}
Consider the problem of learning a linear function $f_\star(\bx) = \<\bx, \btheta_\star\>$ with $\|\btheta_\star\|=1$ from training data $\{(\bx_i, y_i)\}_{i=1:n}$ where $\bx_i\overset{\rm i.i.d.}{\sim} \P_\bx$ and $y_i=f_\star(\bx_i)$ using the function class
\begin{equation}\label{eqn:linfunc}
  \cF = \{\bx\mapsto\<\bx,\btheta\>, \|\btheta\|<L\}.
\end{equation}
We assume $L<1$ so that the problem belongs to the ``hard generalization'' scenario discussed in Section \ref{sec:applicable}, where the hypothesis space is inadequate to fit the function on its own.
\end{assumption}

ResMem proceeds by first learning a linear function $f_n(\bx) = \<\btheta_n, \bx\>$ from $\cF$ through empirical risk minimization (ERM):
\begin{equation}
    \btheta_n = \underset{\|\btheta\|\leq L}{\operatorname{argmin}} \, \frac{1}{n} \sum_{i=1}^n \left[\<\bx_i, \btheta\>-y_i\right]^2.
\end{equation}
The empirical risk minimizer $f_n$ should be thought of as the analog of $f_\nn$ in the deep learning context.
It defines a ground-truth residual function $r_\star(\bx)=f_\star(\bx)-f_n(\bx)$.
Now we fix a test covariate $\bxt\sim\P_x$.
ResMem ``memorizes'' the residual function through the $1$-nearest neighbor to $\bxt$
\begin{equation}
    r_n(\bxt) = r_\star(\bxnn) = f_\star(\bxnn) - f_n(\bxnn),
\end{equation}
where $\bxnn$ is the nearest neighbor to $\bxt$ among the training covariates $\bx_1,\dots,\bx_n$:
\[
 \bxnn = \underset{\bx\in\{\bx_1, \dots, \bx_n\}}{\operatorname{argmin}} \, \|\bx - \bxt\|.
\]
The final prediction is
\begin{equation}
\label{eqn:lin-pred-rule}
f_{n}^{\alg}(\bxt) = f_n(\bxt) + r_n(\bxt).
\end{equation}
Observe that if $\bxt$ coincides with any training sample, 
$f_{n}^{\alg}(\bxt) = f_\star(\bxt)$, i.e., we have explicit memorization.
Note that we worked with $1$-nearest neighbor regressor for simplicity instead of the general $k$-nearest neighbor algorithm.
The effect of choosing different $k$ is not the main focus of this theoretical analysis.

\subsection{A decomposition of the target function}

Next, we introduce a decomposition of $f_\star$, which will help us analyze various components that make up the risk.
Define
\[
\begin{aligned}
\btheta_\infty
    &= \underset{\|\btheta\|\leq L}{\operatorname{argmin}} \,
       \E_{\bx\sim\P_x}\left[\<\btheta, \bx\> - \<\btheta_\star, \bx\>\right]^2,\\
    &= \underset{\|\btheta\|\leq L}{\operatorname{argmin}} \,
       \|\btheta-\btheta_\star\| = L \btheta_\star,
\end{aligned}
\]
which is what ERM learns in the limit of $n\rightarrow\infty$.
We can think of  $\btheta_\infty$ as the best function that ERM can learn.
Then, we can decompose $\btheta_\star$ into $\btheta_\star = \btheta_\infty + \btheta_\perp$, where $\btheta_\perp=\btheta_\star - \btheta_\infty$.
This decomposition can be generalized beyond linear regression.
Since $\btheta_\infty$ defines a function $f_\infty(\bx) = \<\bx, \btheta_\infty\>$, for general non-linear functions, the argument above can be generalized to the decomposition of $f_\star$ to an learnable and non-learnable part
\[f_\star =  f_\infty + f_\perp.\]
Intuitively, $f_\infty$ is the best function in $\mathcal{F}$ that ERM can learn, and $f_\perp$ is beyond the capacity of ERM due to the particular choice of function class.
ResMem approximates $f_\perp$ using the non-parametric nearest neighbor method, and therefore expanding the capacity of the original hypthesis class.

\subsection{A decomposition of the prediction error}
\label{sec:lin-risk-decomp}

We now introduce a decomposition of the prediction risk that reveals how $\alg$ algorithm boosts generalization.
Note that the prediction error of $\alg$ is
\begin{equation}
\label{eqn:lin-test-loss}
\E  \left[\left(f_{n}^{\alg}(\bxt)-f_\star(\bxt)\right)^2\right].
\end{equation}
It can be decomposed into two components: $\E  \left[f_{n}^{\alg}(\bxt)-f_\star(\bxt)\right]^2  \leq 3\times$
\[
[~ \underbrace{\E  (f_n(\bxt)-f_\infty(\bxt))^2
                    +\E  (f_n(\bxnn)-f_\infty(\bxnn))^2}_{T_1}
+\underbrace{\E  (f_\infty(\bxt)
                   -f_\star(\bxt)- f_\infty(\bxnn)
                   +f_\star(\bxnn))^2}_{T_2}
              ~].
\]
We provide the detail of the decomposition in Section \ref{sec:lin-decomp-detail}.
We can see that $T_1$ arises due to the difference between $f_n$ and $f_\infty$ (i.e., the estimation error), which, as we will show later, goes to $0$ as $n$ goes to infinity:
\[T_1\rightarrow 0 ~\text{as}~ n\rightarrow\infty.\]

On the other hand, $T_2$ arises due to the limited capacity of $\cF$.
It captures an irreducible error of the risk, which in general is \textbf{not} asymptotically zero.
However, because of the explicit memorization $\alg$ algorithm introduces ($\bxnn\rightarrow\bxt$ as $n\rightarrow\infty$), we also have
\[T_2\rightarrow 0 ~\text{as}~ n\rightarrow\infty.\]
This decomposition provides a statistical perspective on ResMem: it preserves the asymptotic consistency of $T_1$ as in classical learning problems while enforcing the asymptotic consistency of $T_2$ through the nearest-neighbor method.

\subsection{Main theoretical result}
Given the set up above, we are ready to state the main theoretical result of the paper, which characterizes the rate at which test risk of ResMem approaches 0. The proof is in Appendix \ref{sec:lin-proof}.

\begin{theorem}[Risk for $\alg$ algorithm]
\label{thm:test-loss-ltm}
For the problem defined in Assumption \ref{ass:lin-prob-def} with covariates distribution in Assumption \ref{ass:lin-dist}, the $\alg$ prediction rule $f_{n}^{\alg}(\bxt)$ defined in equation \eqref{eqn:lin-pred-rule} achieves risk \eqref{eqn:lin-test-loss}
\[
\begin{aligned}
\E \left[f_{n}^{\alg}(\bxt) - f_\star(\bxt)\right]^2
\lesssim d^2L^2n^{-2/3} + d^2(1-L)^2  \left[\frac{\log\left(n^{1/d}\right)}{n}\right]^{1/d},
\end{aligned}
\]
where $\lesssim$ denotes inequality up to a universal constant independent of $d, n$ and $L$.
\end{theorem}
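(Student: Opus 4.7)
The plan is to start from the three-term decomposition presented in Section \ref{sec:lin-risk-decomp} and bound the estimation term $T_1$ and the nearest-neighbor term $T_2$ separately. The first step is to re-express each expectation in a form that isolates the key random quantity: for $T_1$ the parameter error $\btheta_n - \btheta_\infty$, and for $T_2$ the nearest-neighbor gap $\|\bxt - \bxnn\|$.

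For $T_1$, since $f_n(\bx) - f_\infty(\bx) = \<\btheta_n - \btheta_\infty, \bx\>$ and every point in the support satisfies $\|\bx\|\leq\sqrt{d+2}$, I would first reduce
\[
T_1 \;\leq\; 2(d+2)\,\E\|\btheta_n - \btheta_\infty\|^2,
\]
applying the same uniform-in-support bound to the summand evaluated at $\bxnn$. The core estimate is then a bound on the constrained-ERM error $\|\btheta_n - \btheta_\infty\|^2$. I would write out the KKT conditions $(\hat\Sigma + \lambda_n I)\btheta_n = \hat\Sigma\btheta_\star$ with $\|\btheta_n\|=L$, and compare them against their deterministic limit $(1+\lambda_\infty)\btheta_\infty = \btheta_\star$, $\|\btheta_\infty\|=L$, which forces $\lambda_\infty = (1-L)/L$ and $\btheta_\infty = L\btheta_\star$. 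A perturbation argument combined with matrix concentration $\|\hat\Sigma - I\|_{\rm op} \lesssim \sqrt{d/n}$ (standard for bounded covariates on the ball) then controls both $\lambda_n - \lambda_\infty$ and the residual $\btheta_n - \btheta_\infty$. The $L^2 n^{-2/3}$ factor is expected to emerge here: because the constraint is active, $\btheta_n$ is pinned to the sphere of radius $L$, and balancing the first-order perturbation in $\hat\Sigma - I$ against the curvature imposed by this spherical constraint produces a cube-root rate rather than the $n^{-1}$ one would see in unconstrained OLS.

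For $T_2$, the explicit form $\btheta_\infty = L\btheta_\star$ gives $f_\infty(\bx) - f_\star(\bx) = (L-1)\<\btheta_\star, \bx\>$, hence
\[
T_2 \;=\; (1-L)^2\,\E\bigl[\<\btheta_\star, \bxt - \bxnn\>^2\bigr] \;\leq\; (1-L)^2\,\E\|\bxt - \bxnn\|^2
\]
by Cauchy--Schwarz and $\|\btheta_\star\|=1$. What remains is the classical $1$-NN distance bound: on the uniform ball in $\R^d$ the tail estimate $\Pr(\|\bxt - \bxnn\| > r) \leq (1 - c\,r^d/(d+2)^{d/2})^n$ (valid whenever $B(\bxt, r)$ lies inside the support) integrates to an expected squared distance of order $d^2\,[\log(n^{1/d})/n]^{1/d}$, once one truncates at a high-probability radius chosen to balance the Poisson-type tail with the trivial bound coming from the bounded diameter $2\sqrt{d+2}$ of the support.

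The main obstacle I foresee is the $n^{-2/3}$ rate for $T_1$. Unconstrained noiseless linear regression with bounded covariates would give $\|\btheta_n - \btheta_\infty\|^2 = O(d/n)$; the slower cube-root rate reflects a nontrivial interaction between covariance concentration and the active norm constraint, and requires carefully tracking how the Lagrange multiplier $\lambda_n$ deviates from $\lambda_\infty$ and propagating this deviation through the coupled KKT system. All remaining pieces -- the reduction to the $T_1, T_2$ decomposition, the $T_2$ nearest-neighbor calculation, and assembling the final bound via $\E[(f_n^{\alg} - f_\star)^2] \leq 3(T_1+T_2)$ -- are either direct or classical.
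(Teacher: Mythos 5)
Your proposal follows the paper's own proof very closely: the three-term decomposition is the same, the reduction $T_1 \le 2(d+2)\,\E\|\btheta_n - \btheta_\infty\|^2$ via the support bound $\|\bx\|\le\sqrt{d+2}$ is the same, the KKT analysis with $(\hat\Sigma + \lambda_n \bI)\btheta_n = \hat\Sigma\btheta_\star$ and $\btheta_\infty = L\btheta_\star$ is the same, and your $T_2$ argument (Cauchy--Schwarz plus a nearest-neighbor tail bound with truncation at a balancing radius, restricted to the event that the $\delta$-ball around $\bxt$ sits inside the support) is essentially the paper's Lemma on $\E Z_n$ verbatim. So the structure is sound.

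The one place you have gone astray is the account you give for the $n^{-2/3}$ rate in $T_1$. You anticipate that "balancing the first-order perturbation in $\hat\Sigma - \bI$ against the curvature imposed by the spherical constraint" produces a cube-root rate, and you flag this as the main obstacle. That mechanism does not exist in the paper's proof. What the paper actually shows is that on the good event $\cA = \{\|\bI - \hat\Sigma\|_{\rm op} < \eps\}$, the perturbation is \emph{linear}: one derives $\btheta_n = (\lambda_n+1)^{-1}\btheta_\star + (\lambda_n+1)^{-1}\bD_n\btheta_\star$ with $\|\bD_n\|\le 3\eps$, uses $\|\btheta_n\| = L$ to get $|(1+\lambda_n)^{-1} - L| \lesssim \eps$, and concludes $\|\btheta_n - \btheta_\infty\|^2 \lesssim \eps^2$ --- no curvature, no square-root loss, no delicate coupling of $\lambda_n$ with $\btheta_n$. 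The exponent $2/3$ then comes purely from the crude split
\[
\E\|\btheta_n - \btheta_\infty\|^2 \;\le\; \eps^2 + 4L^2\,\P(\cA^c)
\;\le\; \eps^2 + 8L^2 d\exp\!\left[-\tfrac{n\eps^2}{2(d+2)(1+\eps)}\right],
\]
followed by plugging in $\eps = n^{-1/3}$, which merely makes the exponential tail vanish faster than any polynomial. This choice is convenient, not forced: any $\eps = n^{-a}$ with $a$ strictly below $1/2$ would do, and $\eps \asymp \sqrt{d\log d/n}$ already gives $\E\|\btheta_n - \btheta_\infty\|^2 \lesssim d\log d / n$, i.e.\ a rate close to $n^{-1}$. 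So your imagined obstacle is a phantom; if you push the concentration argument the natural way with $\eps \asymp \sqrt{d/n}$ (modulo log factors needed by the matrix concentration bound, Proposition~\ref{prop:matcon}), you would in fact \emph{beat} the stated $n^{-2/3}$ rather than be blocked by it. To reproduce the paper's exact statement, just take $\eps = n^{-1/3}$ and accept the resulting looseness.
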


The result includes contribution from two terms introduced in Section \ref{sec:lin-risk-decomp}:
\begin{itemize}
    \item $T_1\lesssim d^2L^2n^{-2/3}$ that arises due to the difference between $f_n$ and $f_\infty$.
    \item $T_2\lesssim \left[\log\left(n^{1/d}\right)/n\right]^{1/d}$ that vanishes as the nearest neighbor of the test point approaches the test point itself $\bxnn\rightarrow\bxt$.
\end{itemize}

The two terms $T_1$ and $T_2$ can be viewed as ``two stages of learning''.
Without the ResMem memorization component, we have the usual story of machine learning: $T_1\rightarrow 0$ at the usual parametric rate, and $T_2$ stays as an irreducible error, so the overall test error diminishes to a constant at a very fast rate.
With the introduction of nearest neighbor memorization procedure, $T_2$ can also be reduced to $0$ at a slower rate, whereas the fast decay of $T_1$ is still preserved.

This result shows why it is \textit{not favorable} to use the $k$-nearest neighbor component to memorize the response directly:
as a corollary of setting $L=0$ in Theorem \ref{thm:test-loss-ltm}, pure nearest neighbor would result in an overall slow rate of $\approx n^{-1/d}$.
However, with ResMem, we can enjoy benefit of having the test loss being asymptotically $0$, 
while also enjoying the fast rate of $n^{-2/3}$ for smaller sample sizes.

\section{Empirical results}
\label{sec:exp}
In this section, we present empirical results on image classification and language modeling that showcase the efficacy of ResMem.
In Section \ref{sec:background}, we present details of applying the ResMem algorithm to classification problems on real dataset.
In Section \ref{sec:exp-vision} and Section \ref{sec:exp-nlp}, we present the setup and the result for vision and language experiments, respectively.
In Section \ref{sec:emp-ana} we conduct an empirical analysis to explain where the improved accuracy of ResMem comes from.
Finally, in addition to evaluating the improvement ResMem algorithm over DeepNet itself, we compare ResMem with other reasonable baselines including \cite{knnlm} in Appendix \ref{sec:knnlm}.

\subsection{Details of ResMem algorithm for classification}
\label{sec:background}

We consider multi-class classification problems over instances $\XCal$ and labels $\YCal \defEq \{1, 2, \dots, L\} = [L]$.
Given training examples $S = \{ ( x_i, y_i ) \}_{i \in [n]} \in ( \XCal \times \YCal )^n$, 
the goal is to learn a scorer $f \colon \XCal \to \Real^L$ that,
given an instance,
assigns an affinity score for each label.
Such an $f$ should minimize the \emph{misclassification error} on test samples:
\begin{equation}
\label{eqn:L01}
L_{01}( f ) \defEq \Pr_{(x, y)}( y \neq {\tt pred}( f( x ) ) ), 
\end{equation}
where ${\tt pred}( z ) \defEq \argmax_{y' \in [L]} z_{y'}$,
and $\Pr$ is the distribution over labelled instances.
To achieve this, one typically minimizes the \emph{empirical loss}
\[
\hat{L}_\ell( f )\defEq \frac{1}{n} \sum_{i \in [n]} \ell( y_i, f( x_i ) ),
\]
where $\ell \colon [L] \times \Real^L \to \Real_+$ is a loss function.
Ideally, one would like to use 
$\ell_{01}( y, f( x ) ) \defEq 1( y \neq {\tt pred}( f( x ) ) )$;
for computational tractability, it is popular to instead use a \emph{surrogate loss}, such as the softmax cross-entropy.

Given the notation above, ResMem operates as follows:
\begin{enumerate}[leftmargin=5mm, itemsep=1mm, partopsep=0pt,parsep=0pt]
    \item \textbf{Train the base DeepNet.}
    Train a neural network $f_\nn$ on the training samples $S$ as usual.

    \item \textbf{Prepare the residual data.} Compute the \emph{residual}
    \footnote{For an overparameterized network that perfectly fits the training sample, the residuals will all be $0$. However, we are interested in either smaller networks or extremely large dataset where implicit memorization is infesible.}
    prediction of each training example as
    \[
    r_i = \textsf{onehot}(y_i) - \textsf{softmax}(f_\nn(x_i)/T),~\forall~i\in [n],
    \]
    where $\textsf{onehot} \colon \YCal \to \R^L$ is the standard encoding that maps the label to a probability vector.
    Here, $T$ is a hyperparameter corresponding to the ``temperature'' scaling of the softmax operation.
    Then, we employ the output of an intermediate layer of the base network $f_\nn$, denoted by $z_i = \phi(x_i)$, as an embedding for the training instance $x_i$.
    These embeddings are utilized for the nearest neighbor search in the next step.
    \item \textbf{Predict via memorized residuals.}
    To obtain a prediction on a test sample $\widetilde{x}\in\XCal$, first compute its embedding $\widetilde{z} = \phi(\widetilde{x})$. 
    Then, use soft $k$-nearest neighbor method to build a function $r_{\knn}$ defined by weights $\overline{w}_i(\widetilde{x})$:
    \begin{equation}
        r_{\knn}(\widetilde{x}) = \sum_{i=1}^n \overline{w}_i(\widetilde{x}) \cdot r_i.
    \end{equation}
    The weights $\overline{w}_i(\widetilde{x})$ satisfy $\sum_{i}\overline{w}_i(\widetilde{x}) = 1$, and
    are computed from raw weights $w_i$ as follows:
    \[
    w_i = \exp(-\|\widetilde{z}-z_i\|_2/\sigma), 
    \quad
    \overline{w}_i(\widetilde{x}) \propto \mathds{1}\left(w_i\geq w_{(k)}\right) w_i, 
    \]
    where $w_{(k)}$ represents the $k$-th largest entry of $w_i$'s. Note that $k$ and $\sigma$ are two hyperparameters that collectively controls the locality of nearest neighbor search.
\end{enumerate}

\noindent We make the final prediction based on the following scorer: 
\begin{equation}
\label{eqn:resmem-pred-classification}
f_\alg(\widetilde{x}) = \textsf{softmax}(f_\nn(\widetilde{x})/T) + r_{\knn}(\widetilde{x}).   
\end{equation}

\begin{remark}[Explicit memorization]
Smaller $k$ or $\sigma$ corresponds to putting higher weight on residuals of the closest neighboring training examples.
For sufficiently small $k$ and $\sigma$, $f_\alg$ achieves exact memorization of the training sample, i.e., ${\tt pred}(f_\alg(x_i))=y_i$ for all $i \in [n]$.
\end{remark}

\begin{remark}[Computation cost]
The vision experiments have moderate training sample size, so we perform exact nearest neighbor search and discuss the computation cost in Section \ref{sec:exp-vision}.
For language experiments, the training sample size is so large that the exact nearest neighbor computation is infesible, so we rely on \emph{approximate} nearest neighbor search discussed in Section \ref{sec:exp-vision}.
\end{remark}

\subsection{Image classification}
\label{sec:exp-vision}

\begin{figure*}[t]
    \centering
    \subfigure[\label{fig:cifar-arch}Test(left)/Training (right) v.s. architectures.]{
    \includegraphics[width=.24\textwidth]{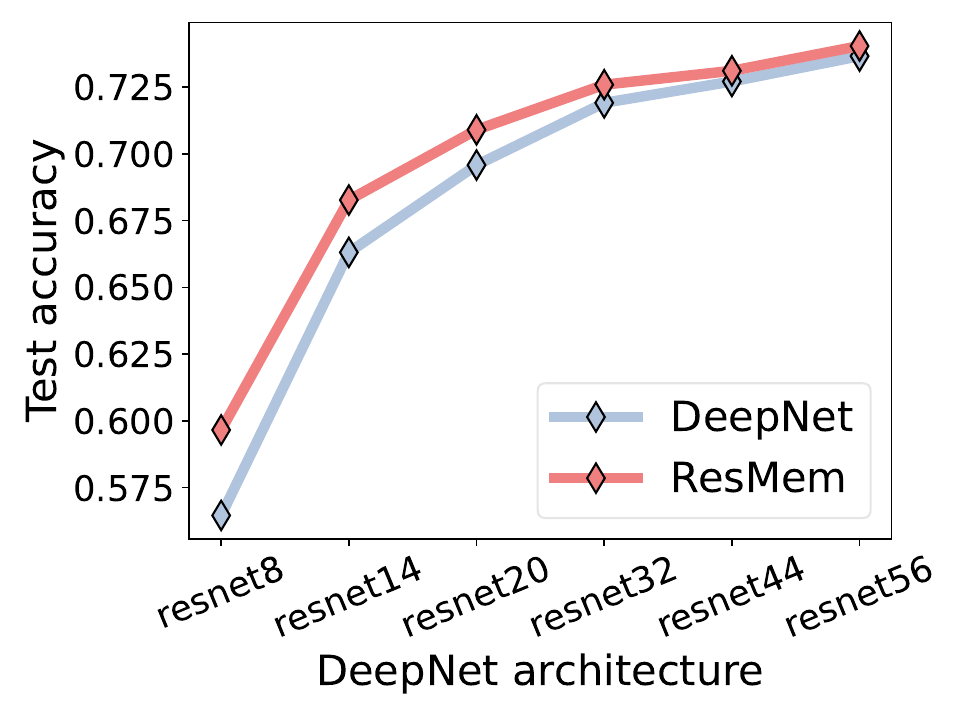}
    \includegraphics[width=.24\textwidth]{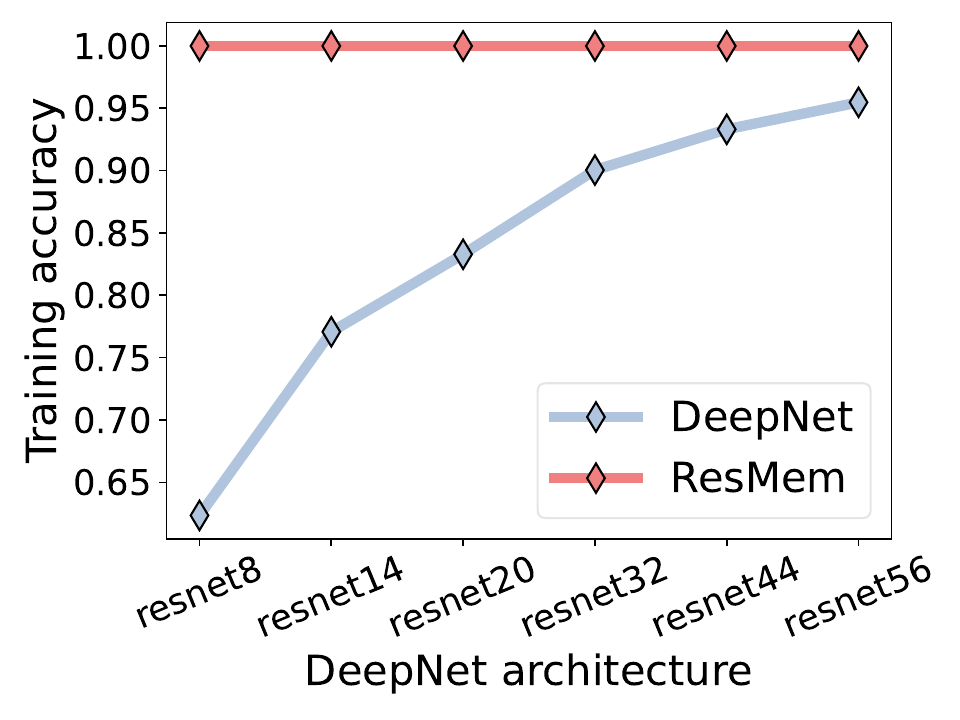}}
    \subfigure[\label{fig:cifar-sample}Test(left)/Training (right) acc. v.s. sample size.]{
    \includegraphics[width=.24\textwidth]{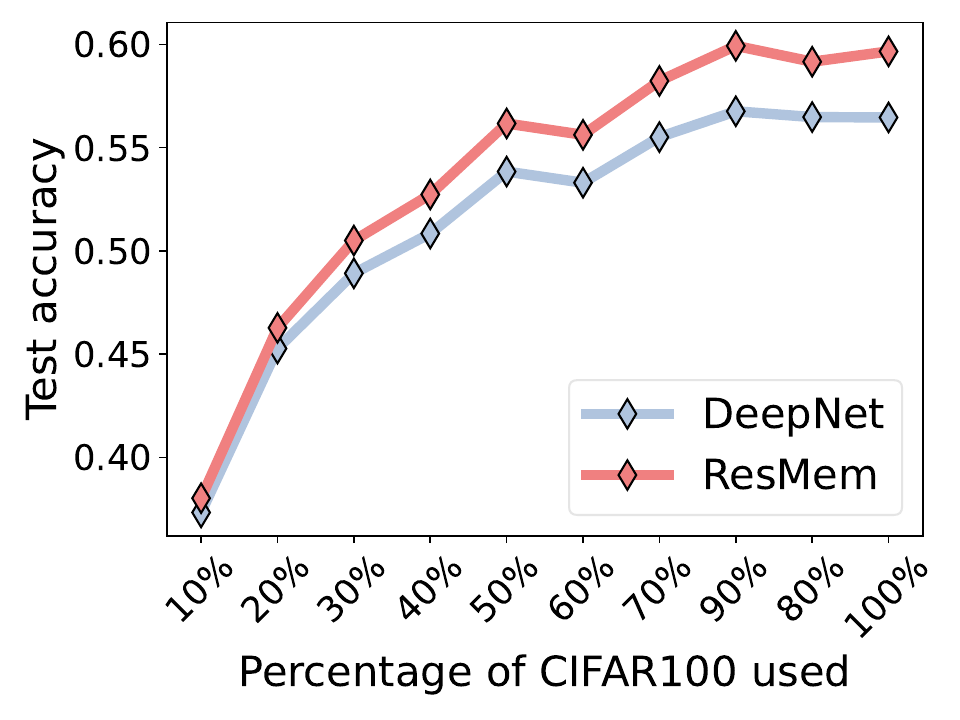}
    \includegraphics[width=.24\textwidth]{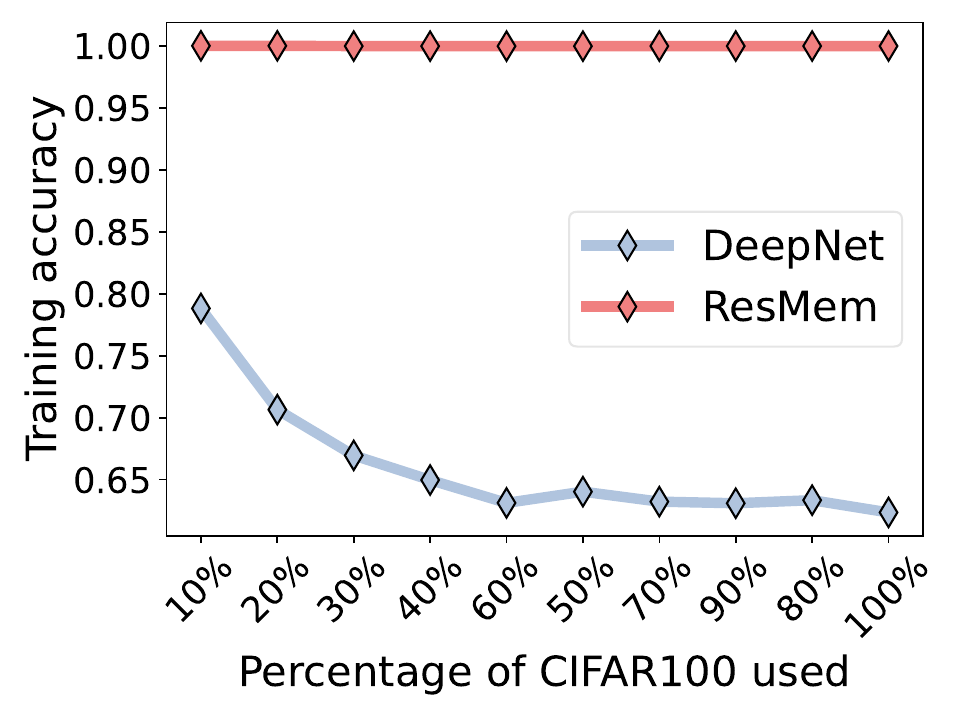}}
    \caption{
    ResMem improvement on CIFAR100 with respect to training sample size and deep network architecture.
    \textbf{(a):} Using progressively larger CIFAR-ResNet architecture.
    \textbf{(b):} Using $10\%, 20\%, \dots, 100\%$ of training data.
    }
    \label{fig:robustness-sample-arch}
\end{figure*}

In this subsection, we mainly consider image classification task on ResNet~\citep{resnet} with CIFAR100~\citep{cifar} dataset.
We provide additional ImageNet~\cite{ILSVRC15} results in Apendix \ref{sec:imagenet}.

\paragraph{Setup.}
We use CIFAR-ResNet-$\{8, 14, 20, 32, 44, 56\}$  as the base DeepNet.
For all six DeepNet training, we use SGD with batch size 128, trained for 256 epochs.
We use a peak learning rate 0.4, and momentum 0.9.
We warm up the learning rate linearly for the first 15 epochs, and decay the learning rate by $0.1$ after epochs $\{ 96, 192, 224 \}$.
For ResMem, we use the pre-logit layer as the image embedding, which has dimension 64.
For the nearest neighbor search (Step 3, Section \ref{sec:background}), we define the distance between two images to be the $\ell_2$ distance between their embeddings.
We use $\sigma=0.7$, $k=53$, and $T=1.4$ to compute the weights for the nearest neighbor regressor.
We provide the sensitivity analysis of test accuracy against ResMem parameters in Appendix \ref{sec:add-cifar} (cf. Figure \ref{fig:robustness-hyper}).

\paragraph{Results.}

The results for CIFAR-ResNet-$\{8, 14, 20, 32, 44, 56\}$ are reported in Figure \ref{fig:cifar-arch}.
We can see that ResMem boosts the test accuracy of CIFAR-ResNet8 from 56.46\% to \textbf{59.66\%}, which is between the base DeepNet test accuracy for CIFAR-ResNet8 and CIFAR-ResNet14.
To access the statistical reliability of the improvement, we repeat the CIFAR-ResNet-8 experiment 5 times over random initialization of DeepNet etc. We and that the average ResMem accuracy is $59\%$ with standard deviation $0.7\%$, and the average DeepNet accuracy is $56.5\%$ with standard deviation $0.8\%$.

Computationally, we estimate the CPU latency of a CIFAR-ResNet-8 to be 15.9 ms for a single test image.
By contrast, the $k$-NN step takes 4.8 ms for the same test image.
To contextualize the latency cost, the total cost of ResMem with ResNet-8 (15.9 ms + 4.8 ms) is lower than the cost of the next-sized model, i.e., ResNet-14 (26.2 ms).
Regarding the memory cost, for a batch size of 1 and images of size 32 x 32, a ResNet-8 (~68K params) requires 2.5MB, while a ResNet-14 (~128K params) requires 4MB. Embeddings from a ResNet-8 and ResNet-14 are both 64 dimensional. To embed the entire CIFAR100 training set (50K examples) requires ~15MB of disk space. 
\paragraph{Varying sample size.}
We repeat the above experiment on CIFAR-ResNet-8 with subsets ($10\%, 20\%, \dots, 100\%$) of CIFAR100 training data (subsampled uniformly across different classes).
The size of the index set for nearest-neighbor search is the same as the training set for base neural networks (e.g., model with 10\% CIFAR100 data also uses the same 10\%  data for nearest-neighbor search).
On the left (right) of Figure \ref{fig:cifar-sample}, we report the test (training) accuracy of ResMem and baseline DeepNet.
As a sanity check, we can see that ResMem always achieves perfect training accuracy, and the DeepNet training accuracy decreases as samples increase (since it's harder to fit larger dataset).
We can see that ResMem yields \emph{progressively larger margin of improvement when more data is used}.
This trend suggests a desirable property of ResMem: in real problems where the dataset is extremely large, ResMem is expected to bring even greater benefit.

\begin{figure*}[t]
    \centering
    \includegraphics[width=\textwidth]{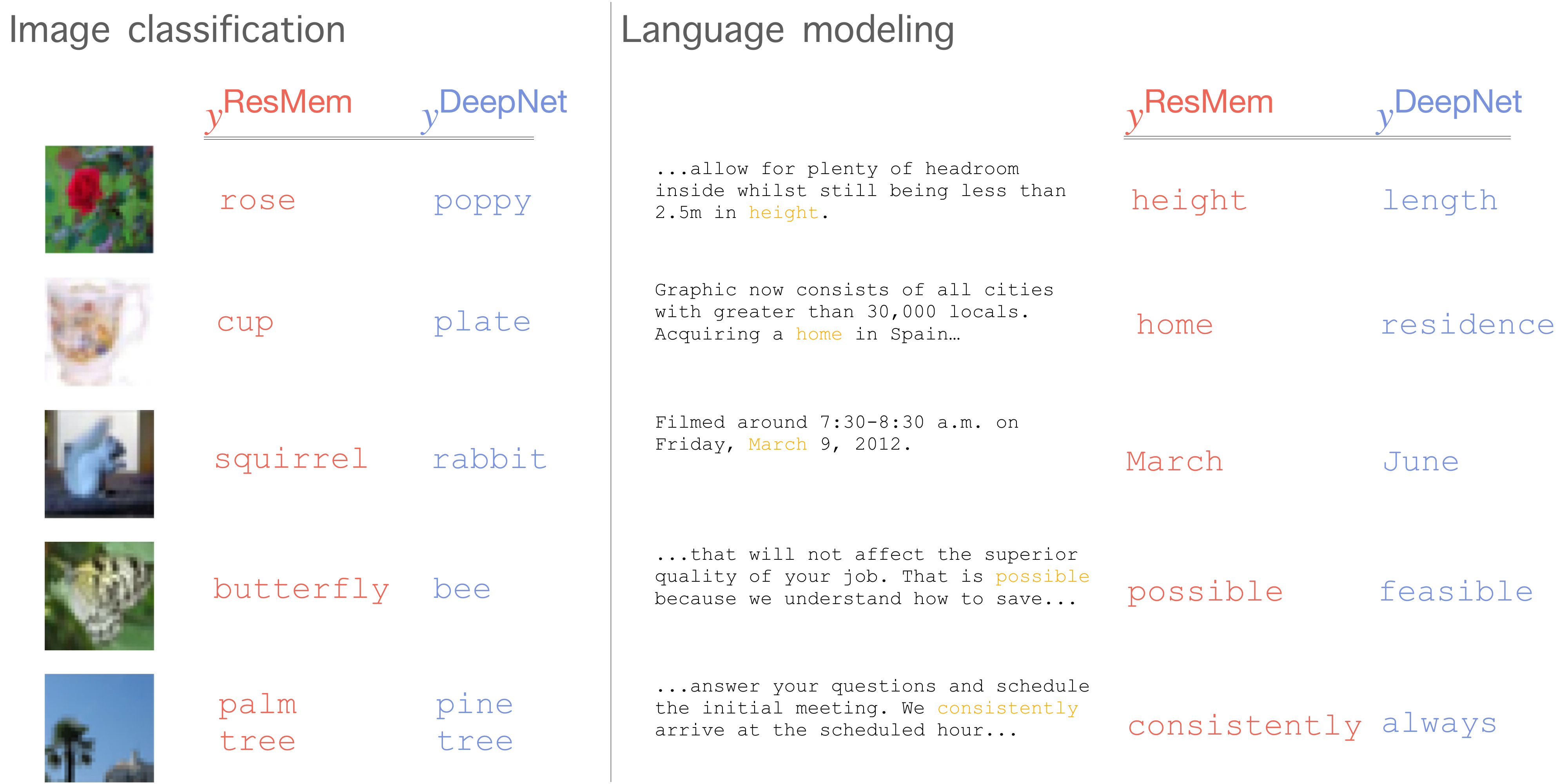}
    \caption{Examples from CIFAR100 and C4 test set with the property that 
    \textbf{(i)} $y^\textsf{ResMem}$ is correct;
    \textbf{(ii)} $y^\textsf{DeepNet}$ is wrong but close in meaning.
    We use red to denote the ResMem prediction and blue to denote DeepNet prediction.
    The DeepNet predictions capture \emph{coarse} structure (e.g., predicting \texttt{poppy} for a sample whose true label is \texttt{rose}),
    which can be refined by ResMem capturing the remaining \emph{fine-grained} structure.
    }
    \label{fig:cifar-nlp-analysis}
\end{figure*}

\subsection{Language modeling}
\label{sec:exp-nlp}

\paragraph{Setup.}
For the language experiment, we use a Decoder-Only T5-\{small, large\}~\citep{2020t5} model and C4~\citep{2020t5} dataset.
C4 is generated from scraping the internet and commonly used as a pretraining dataset or part of the pretraining mix.
We pre-trained the DeepNet on C4 training split with auto-regressive language modeling task.
For experimental efficiency, we used 1\% of the C4 training split (which corresponds to 1,639 million tokens) as the retrieval database, and extracted last transformer layer's pre-MLP, post-LayerNorm representations as the key embeddings for $k$NN search, and we created the query embeddings using the whole validation split and the same representation location. 
For each query, we retrieved 50 neighbors with $L_2$ distance using approximate nearest neighbor search algorithm ScaNN~\citep{avq_2020}.
We used the temperature $T = 1$ for the residual computation and $\sigma = 1$ for computing the neighbor weights. 
The predicted token is the one with highest probability, similar to greedy decoding, and we measured the prediction accuracy to match the vision experiments.

\paragraph{Results.}
On T5-small, ResMem boosts test accuracy from 38.01\% to {\bf 40.87\%}, which is around the accuracy (40.08\%) of a T5-base model without ResMem.
On T5-large, ResMem boosts the test accuracy from 44.8\% to {\bf 45.6\%}.
This demonstrates that with explicit memorization, we may leverage smaller base language models while reaping the performance benefits of large language models.
Computationally, as the index set is quite large (1.6 billion tokens), exact k-nearest neighbor search is infeasible. So we use the approximate nearest neighbor search algorithm ScaNN \cite{avq_2020} to reduce compute time.
Please see Appendix~\ref{sec:detail-nlp} for details on base model training and data processing.

\subsection{Where does the improvement come from?}
\label{sec:emp-ana}

In this section, we identify test samples that contributes to the accuracy improvement of CIFAR100 with CIFAR-ResNet-8 and C4 with T5-small.
Let ${\sf Gain}_{\sf ResMem}$ be the difference between the test accuracy of ResMem and baseline DeepNet:
\[
{\sf Gain}_{\sf ResMem} = L_{01}(f_\alg) - L_{01}(f_\nn),
\]
where $L_{01}$ is the misclassification error as defined in equation \eqref{eqn:L01}.
We offer a decomposition of ${\sf Gain}_{\sf ResMem}$ that sheds light into the mechanism behind ResMem.
For a test set $\{(x_i, y_i)\}_{i=1}^m$, let $y_i^\alg$ be the ResMem prediction on instance $x_i$ and let $y_i^\nn$ be the baseline neural network prediction on $x_i$.
When $y^\alg_i = y^\nn_i$, sample $x_i$ does not contribute to ${\sf Gain}_{\sf ResMem}$.
When $y^\alg_i\neq y^\nn_i$, this could arise 
either from the desirable event where the deep network misclassifies while ResMem classifies correctly;
or from the undesirable event where the ResMem misclassifies, while the deep network classifies correctly. 
These can be summarized by the \textsf{\small TPR} (\textit{true positive rate})
and \textsf{\small FPR} (\textit{false positive rate})
respectively:
\begin{equation}
    \textsf{\small TPR} = \frac{1}{m}\sum\nolimits_{i=1}^m \mathds{1}\{y^\nn_i\neq y_i ~\text{and}~ y^\alg_i = y_i\}.
\end{equation}
\begin{equation}
    \textsf{\small FPR} = \frac{1}{m}\sum\nolimits_{i=1}^m \mathds{1}\{y^\nn_i= y_i ~\text{and}~ y^\alg_i \neq y_i\}.
\end{equation}

Note that~${\sf Gain}_{\sf ResMem} = \textsf{\small TPR} - \textsf{\small FPR}.$
The decomposition of ${\sf Gain}_{\sf ResMem}$ says that the gain of ResMem came from the \textsf{\small TPR} samples, provided they outweigh the \textsf{\small FPR} samples.

On CIFAR-ResNet-8, we find \textsf{\small TPR}$=$5.89\% and \textsf{\small FPR}$=$2.70\%, leading to ${\sf Gain}_{\sf ResMem}$=3.19\%.
On T5-small with C4 validation split, we find \textsf{\small TPR}$=$5.37\% and \textsf{\small FPR}$=$2.44\%, leading to ${\sf Gain}_{\sf ResMem}$=2.93\%.

\paragraph{Analysis of \textsf{\small TPR} samples}
Focusing on the test samples where ResMem helps 
($y_i = y^\alg_i \neq y^\nn_i$), 
we identify a common underlying pattern: 
while the DeepNet makes an incorrect prediction, it still captures some coarse structure.
For example, in CIFAR100, 
one sample has correct label
$
y_i = y^\alg_i = \texttt{rose},
$
but the DeepNet predicts
$
y^\nn_i = \texttt{poppy}
$,
i.e., the label of a different type of flower.
(cf. Figure~\ref{fig:cifar-nlp-analysis}).
We find similar behavior for the language modeling task (cf. Figure~\ref{fig:cifar-nlp-analysis}).

This empirical analysis suggests the DeepNet in isolation can already learn some large scale structures, but is unable to make fine-grained distinctions.
This is where ResMem helps: 
\emph{ResMem helps memorize information in the training label that the DeepNet cannot learn.}

\paragraph{Additional insights from the decomposition.}
In this paper, we choose the ResMem hyperparameters that minimizes the test error on the validation set or, equivalently, maximize ${\sf Gain}_{\sf ResMem}$.
Inspired by the decomposition of ${\sf Gain}_{\sf ResMem}$, we propose an alternative hyperparameter selection procedure based on the following optimization problem:
\[
{\rm maximize}_{\textsf{\small FPR}(\texttt{hyperparam.})<0.05} \textsf{\small TPR}(\texttt{hyperparam.}),
\]
which ensures that ResMem modifies the DeepNet predictions in a more conservative manner.
In particular, bounding \textsf{\small FPR} implies that ResMem has minimal impact on the examples where DeepNet already makes correct predictions.
At the same time, a higher value of \textsf{\small TPR} corresponds maximizing the desirable occurrences where ResMem can correct a wrong prediction by DeepNet.

\section{Discussion and future works}
\label{sec:discussion}
\paragraph{Joint training of $k$NN and DeepNet.}
The current formulation of ResMem builds the base DeepNet and $k$NN components sequentially.
Consequently, the DeepNet is trained completely oblivious to the fact that there is a subsequent $k$NN model that will memorize its residuals.
A natural direction of future work is to consider the \emph{joint} training of DeepNet and $k$NN, so that the models can dynamically interact during training to determine which portion of label is for DeepNet to learn, and the remaining is for $k$NN to memorize.

To explore the role of training during the first stage, we re-evaluate the CIFAR-ResNet-8 experiment by stopping DeepNet training at different epochs (Table \ref{table:joint}).
\begin{table}[h]
\centering
\caption{Comparison of DeepNet and ResMem accuracy over epochs on CIFAR-ResNet-8 experiment.}
\label{table:joint}
\begin{tabular}{lccccc}
\toprule
\#epoch & 128 & 160 & 192 & 224 & 256 \\
\midrule
DeepNet acc. & 34.0\% & 56.2\% & 55.6\% & 57.2\% & 56.6\% \\
ResMem acc. & 49.3\% & 60.2\% & 58.6\% & 59.2\% & 59.5\% \\
\bottomrule
\end{tabular}
\end{table}
We can see that when the \#epoch is small, ResMem has a dramatic improvement in accuracy. One of the key roles of the first training phase is to learn good representations of the training data so the nearest neighbor retrieval is performed on more meaningful representations.
This simple experiments suggests that the proposed direction has the potential to dramatically reduce the training time of DeepNet -- while obtaining similar test accuracy with the help of ResMem.

\paragraph{Calibration of ResMem.}
A potential problem with applying ResMem to classification is \emph{scorer mis-calibration}.
The output of the ResMem prediction vector $f_\alg(x)$~\eqref{eqn:resmem-pred-classification} is not guaranteed to lie on the probability simplex.
This is not an issue when we only care about the predicted class membership, since we take the argmax of $f_\alg(x)$.
However, this limitation hinders us to access the \emph{confidence} of the ResMem prediction.
To remedy this, a possible future work is to consider alternative notions of residual.
For example, we can do memorization in the logit space instead of the probability space.
Then, the one-hot encoding of the true label may be replaced by class mean when defining the residual.

\paragraph{Distribution shift.}
Finally, ResMem can be a promising approach to tackle test-time covariate shift.
The nearest neighbor modifies the prediction of DeepNet based on the training covariate that are closer to the test covariate, making the algorithm more \emph{adaptive} to the specific test covariate~\citep{sun19ttt}.

\section*{Acknowledgements}
Part of the work is done while Zitong Yang is at Google Research, New York.
We would like to thank Chong You, Yu Sun, Yaodong Yu and anonymous reviewers for their feedback on the final draft.
Zitong Yang would like to thank Shuangping Li for discussion regarding the proof of Lemma \ref{lem:nearestneighbor}.
Zitong Yang would also like to acknowledge the support of Albion Walter Hewlett Stanford Graduate Fellowship.

\bibliographystyle{plainnat}
\bibliography{reference}

\newpage
\appendix
\section{Some concentration results for uniform random variables}
In this section, we state some concentration results that are useful for the theoretical analysis in Section \ref{sec:thy}.
Let $\bxt, \bx_1, \dots, \bx_n\overset{\rm i.i.d.}{\sim}$Unif$(\cB_{\bzero, \sqrt{d+2}})$ be i.i.d. samples from the uniform distribution over the Euclidean norm ball of radius $\sqrt{d+2}$ in $\R^d$.
Let
\begin{equation}
  Z_n = \min_{\bx\in \{\bx_1, \bx_2, \dots, \bx_n\}}\|\bxt-\bx\|^2.
\end{equation}
If $n=1$, $\E Z_1$ is the sum of the variance of each coordinate of Unif$(\cB_{\bzero, \sqrt{d+2}})$.
Therefore, $\E Z_n$ provides a generalized measure of concentration.
Intuitively, $\E Z_n \rightarrow 0$ as $n\rightarrow \infty$.
The proposition below provides a upper bound on the rate of convergence.

\begin{lemma}[Nearest Neighbor concentration]
\label{lem:nearestneighbor}
Given the assumptions above
\begin{equation}
\E Z_n \lesssim  d^2 \left[\frac{\log\left(n^{1/d}\right)}{n}\right]^{1/d},
\end{equation}
where $\lesssim $ means inequality up to an universal constant independent of $d$ and $n$.
\end{lemma}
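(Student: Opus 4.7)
My plan is to control $\E Z_n$ via the tail-integral identity
$$\E Z_n = \int_0^{4R^2} \Pr(Z_n > t)\,dt, \qquad R = \sqrt{d+2}.$$
Conditioning on the test point $\bxt$ and using that $\bx_1,\dots,\bx_n$ are i.i.d.\ and independent of $\bxt$, we have $\Pr(Z_n > t \mid \bxt) = (1 - p(\bxt, \sqrt{t}))^n$, where $p(\bxt, r)$ denotes the Lebesgue ratio of $\cB_{\bxt, r} \cap \cB_{\bzero, R}$ to $\cB_{\bzero, R}$. The entire problem thus reduces to producing a uniform lower bound on $p(\bxt, r)$ and then evaluating the integral.

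The first (and main) step is a geometric lemma: for every $\bxt \in \cB_{\bzero, R}$ and every $r \in (0, 2R]$, $p(\bxt, r) \geq (r/(2R))^d$. I would prove this by exhibiting an explicit ball of radius $r/2$ contained in the intersection $\cB_{\bxt, r} \cap \cB_{\bzero, R}$. Take the center $y = (1 - r/(2R))\bxt$. Since $\|y\| = (1 - r/(2R))\|\bxt\| \leq R - r/2$, every point within $r/2$ of $y$ has Euclidean norm at most $R$; since $\|y - \bxt\| = (r/(2R))\|\bxt\| \leq r/2$, every point within $r/2$ of $y$ is within $r$ of $\bxt$. This handles the boundary case (where $\bxt$ lies near $\partial\cB_{\bzero, R}$) with an explicit constant; the factor $2^d$ is harmless because it enters only inside the $d$-th power $(r/R)^d$.

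Next, I would apply $(1-u)^n \leq e^{-nu}$ to get $\Pr(Z_n > t) \leq \exp(-n(\sqrt{t}/(2R))^d)$ and split the tail integral at the threshold $t_0 = 4R^2(\log n/n)^{2/d}$. By construction $n(\sqrt{t_0}/(2R))^d = \log n$, so
$$\E Z_n \;\leq\; t_0 + 4R^2 \exp\!\bigl(-n(\sqrt{t_0}/(2R))^d\bigr) \;=\; 4R^2\,(\log n/n)^{2/d} + 4R^2/n \;\lesssim\; (d+2)\,(\log n/n)^{2/d},$$
since the first summand dominates whenever $d \geq 2$.

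Finally, this bound is actually sharper than the one claimed, so to land on the stated form I would invoke the elementary inequality $(d+2)(\log n/n)^{2/d} \lesssim d^2\,[\log(n^{1/d})/n]^{1/d}$. Raising both sides to the $d$-th power reduces it to $(\log n)/n \lesssim d^{d-1}$, which holds trivially for $d \geq 2$ since $(\log n)/n \leq 1/e$ while $d^{d-1} \geq 2$. The main obstacle in this plan is the geometric lemma: the naive computation $p(\bxt, r) = (r/R)^d$ holds only when $\|\bxt\| \leq R - r$, so a uniform bound that also covers boundary points requires the inscribed-ball construction above; once that is in place, the remainder of the argument is routine tail integration and an elementary inequality.
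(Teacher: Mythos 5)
Your proof is correct, and it takes a genuinely different route from the paper's. The central difference lies in how the two arguments handle the \emph{boundary effect} --- the fact that when $\bxt$ lies near $\partial\cB_{\bzero,\sqrt{d+2}}$, the small ball $\cB_{\bxt,\delta}$ is not contained in the domain. The paper handles this by conditioning: it works on the event $\cE_2=\{\delta\leq\sqrt{d+2}-\|\bxt\|\}$ where the small ball is fully inscribed, so the small-ball probability is exactly $(\delta/\sqrt{d+2})^d$, and then pays a penalty $1-\P(\cE_2)\approx d\delta/\sqrt{d+2}$ for the complementary event. Because this boundary penalty scales \emph{linearly} in $\delta$, it becomes the dominant term after optimizing, and is precisely what drags the paper's rate down to $[\cdot]^{1/d}$ rather than $[\cdot]^{2/d}$. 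Your inscribed-ball lemma ($p(\bxt,r)\geq(r/(2R))^d$ uniformly over $\bxt\in\cB_{\bzero,R}$, via the explicitly centered ball of radius $r/2$ at $(1-r/(2R))\bxt$) sidesteps this entirely: the $2^d$ cost is absorbed inside the $d$-th power and only shifts $t_0$ by a factor of $4$, whereas the boundary tail vanishes. Combined with the tail-integral identity (versus the paper's conditional-expectation split), this gives you the strictly sharper bound $\E Z_n\lesssim (d+2)(\log n/n)^{2/d}$, which trivially dominates the stated $d^2[\log(n^{1/d})/n]^{1/d}$. Your final derivation of the elementary inequality is stated a bit loosely (after raising to the $d$-th power and clearing the $\log(n^{1/d})=(\log n)/d$ factor, the needed inequality is $(d+2)^d(\log n/n)\lesssim d^{2d-1}$, not quite $(\log n)/n\lesssim d^{d-1}$), but it does hold for $d\geq 2$ since $(\log n)/n\leq 1/e$ and $(d+2)^d\leq 2 d^{2d-1}$, so the conclusion stands. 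Net: a cleaner argument with a better constant and a better exponent, at the price of one extra geometric lemma.
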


\begin{proof}
Define
\begin{equation}
\begin{aligned}
\cE_1 &= \{Z_n\leq \delta^2\},\\
\cE_2 &= \{\delta\leq \sqrt{d+2}-\|\bxt\|\}.
\end{aligned}
\end{equation}

We will compute two probabilities $\P(\cE_1|\cE_2)$ and $\P(\cE_2)$ that will be useful latter.
\begin{equation}
\begin{aligned}
\P(\cE_1^c|\cE_2) &= \P(Z_n\geq\delta^2 | \cE_2) = \P(\|\bxt-\bx_i\|\geq\delta,~\forall i|\cE_2),\\
&= \E_{\bxt} \P(\|\bxt-\bx_i\|\geq\delta|\cE_2,\bxt)^n = \E_{\bxt} (1-\P(\|\bxt-\bx_i\|\leq\delta|\cE_2, \bxt))^n,\\
&= \E_{\bxt} \left[1- \frac{\text{Vol}(\cB_{\bxt, \delta})}{\text{Vol}(\cB_{\bzero, \sqrt{d+2}})} \right]^n = \left[1 - \left(\frac{\delta}{\sqrt{d+2}}\right)^d\right]^n,\\
&\leq \exp\left[-n\left(\frac{\delta}{\sqrt{d+2}}\right)^d\right].
\end{aligned}
\end{equation}

Next, we compute $\P(\cE_2)$
\begin{equation}
\begin{aligned}
\P(\cE_2) = \P(\|\bxt\|\leq \sqrt{d+2}-\delta) = \left(\frac{\sqrt{d+2}-\delta}{\sqrt{d+2}}\right)^d = \left(1-\frac{\delta}{\sqrt{d+2}}\right)^d.
\end{aligned}
\end{equation}

We use $\cE_1$ and $\cE_2$ to compute the following upper bound
\begin{equation}
\begin{aligned}
\E Z_n &= \E(Z_n | \cE_1\cap\cE_2) \P(\cE_1\cap\cE_2) + \E (Z_n|(\cE_1\cap\cE_2)^c) P((\cE_1\cap\cE_2)^c), \\
&\leq \delta^2 + (2\sqrt{d+2})^2 \left(1 - \P(\cE_1\cap\cE_2)\right), \\
&= \delta^2 + 4(d+2) \left[1-\P(\cE_1|\cE_2)\P(\cE_2)\right].
\end{aligned} 
\end{equation}

To find an upper bound for $\E Z_n$, we need to find an upper bound for $1-\P(\cE_1|\cE_2)\P(\cE_2)$.
\begin{equation}
\begin{aligned}
1-\P(\cE_1|\cE_2)\P(\cE_2) &= 1-\left[1-\P(\cE_1^c|\cE_2)\right]\P(\cE_2), \\
&= 1 -\P(\cE_2) + \P(\cE_1^c|\cE_2)\P(\cE_2),\\
&\leq 1-\P(\cE_2) + \P(\cE_1^c|\cE_2).
\end{aligned}
\end{equation}
Now choose $\delta=\sqrt{d+2}n^{-1/d}\left[\log\left(n^{1/d}\right)\right]^{1/d}$.
\begin{equation}
\P(\cE_1^c|\cE_2)\leq \exp\left[-n\left(\frac{\delta}{\sqrt{d+2}}\right)^d\right] = \exp\left[-nn^{-1}\log\left(n^{1/d}\right)\right] = n^{-1/d},
\end{equation}

and
\begin{equation}
\P(\cE_2) = \left(1-\frac{\delta}{\sqrt{d+2}}\right)^d\geq 1-d\frac{\delta}{\sqrt{d+2}} = 1-dn^{-1/d}\left[\log\left(n^{1/d}\right)\right]^{1/d}.
\end{equation}
Thus
\begin{equation}
1-\P(\cE_1|\cE_2)\P(\cE_2) \leq 1 - 1 + dn^{-1/d}\left[\log\left(n^{1/d}\right)\right]^{1/d} + n^{-1/d} \lesssim dn^{-1/d}\left[\log\left(n^{1/d}\right)\right]^{1/d}.
\end{equation}
Combining everything together, we get
\begin{equation}
\begin{aligned}
\E Z_n &\leq (d+2) n^{-2/d} \left[\log\left(n^{1/d}\right)\right]^{2/d} + 4(d+2)\times dn^{-1/d}\left[\log\left(n^{1/d}\right)\right]^{1/d},\\
&\lesssim d^2 n^{-1/d}\left[\log\left(n^{1/d}\right)\right]^{1/d},\\
&= d^2 \left[\frac{\log\left(n^{1/d}\right)}{n}\right]^{1/d}.
\end{aligned} 
\end{equation}
This completes the proof.
\end{proof}

\begin{proposition}[\cite{wainwright2019high} Corollary 6.20]
    \label{prop:matcon}
    Let $\bx_i\overset{\rm i.i.d.}{\sim}$Unif$~(\cB_{\bzero, \sqrt{d+2}})$ for $i=1, \dots, n$ be uniformly distributed over a ball of radius $B$ in $\R^d$ centered at $\bzero$.
    Let
    \[
        \bSigma_n = \frac{1}{n} \sum_{i=1}^n \bx_i\bx_i^\sT
    \]
    be the sample covariance matrix. Then
    \[
        \P(\|\bSigma_n - \bI\|_{\rm op} > \eps) \leq 2d\exp\left[ - \frac{n\eps^2}{2(d+2)(1+\eps)}\right].
    \]
\end{proposition}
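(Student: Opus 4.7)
The plan is to prove this as a direct application of the matrix Bernstein inequality to the centered sum
\[
\mathbf{S}_n \defEq \sum_{i=1}^n \mathbf{Z}_i, \qquad \mathbf{Z}_i \defEq \bx_i \bx_i^\sT - \bI.
\]
The radius $\sqrt{d+2}$ in Assumption \ref{ass:lin-dist} is calibrated precisely so that $\E[\bx_i \bx_i^\sT] = \bI$, hence each $\mathbf{Z}_i$ is iid, self-adjoint, and mean-zero, which is exactly the setting matrix Bernstein is designed for. Observing that $\|\bSigma_n - \bI\|_{\rm op} = \|\mathbf{S}_n\|_{\rm op}/n$, any tail bound on $\|\mathbf{S}_n\|_{\rm op}$ translates immediately into the stated tail bound on $\|\bSigma_n - \bI\|_{\rm op}$.

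Next I would verify the two inputs required by matrix Bernstein: a uniform operator-norm bound on each $\mathbf{Z}_i$ and a bound on the matrix variance $\bigl\|\sum_i \E[\mathbf{Z}_i^2]\bigr\|_{\rm op}$. Since $\bx_i\bx_i^\sT$ is rank-one with nonzero eigenvalue $\|\bx_i\|^2 \leq d+2$, the spectrum of $\mathbf{Z}_i$ lies in $[-1, d+1]$, giving $\|\mathbf{Z}_i\|_{\rm op} \leq d+2$ almost surely. For the variance, a short expansion together with $\|\bx_i\|^2 \leq d+2$ gives
\[
\E[\mathbf{Z}_i^2] \;=\; \E\bigl[\|\bx_i\|^2\, \bx_i\bx_i^\sT\bigr] - \bI \;\preceq\; (d+2)\,\E[\bx_i\bx_i^\sT] - \bI \;=\; (d+1)\bI,
\]
so the matrix variance parameter satisfies $v \leq n(d+2)$. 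Feeding $R = d+2$ and $v = n(d+2)$ into a Bernstein-type matrix tail inequality of the form $\P(\|\mathbf{S}_n\|_{\rm op} > t) \leq 2d\exp\bigl(-\tfrac{t^2/2}{v + Rt}\bigr)$ and setting $t = n\eps$ produces exactly the exponent $-\,n\eps^2 / [2(d+2)(1+\eps)]$ stated in the proposition.

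The only delicate step is justifying the specific form of matrix Bernstein with $v + Rt$ in the denominator, rather than the looser $v + Rt/3$ that appears in Tropp's most commonly quoted version. This is where I expect the main work to sit: one invokes the matrix Laplace transform method, combining Lieb's concavity theorem with the Golden--Thompson inequality to reduce the matrix MGF bound to a scalar computation, and then uses a Bennett-type bound on the scalar MGF of the spectral entries of $\mathbf{Z}_i$ (namely $\log \E e^{\theta \mathbf{Z}_i} \preceq \tfrac{\theta^2}{2(1-R\theta)}\,\E[\mathbf{Z}_i^2]$ for $\theta \in (0, 1/R)$), followed by optimization in $\theta$. Once that MGF estimate is in hand, the rest is bookkeeping: substitute the bounds on $R$ and $v$ derived above, set $\theta = \eps/[(d+2)(1+\eps)]$, rearrange, and divide through by $n$ to obtain the claim.
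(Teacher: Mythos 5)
Your proof is correct and follows essentially the same route as the paper, which simply imports this statement from Wainwright's Corollary 6.20: that corollary is itself obtained by applying matrix Bernstein to the centered outer products $\bx_i\bx_i^\sT - \bI$, with uniform operator-norm bound of order $d+2$ and variance proxy $n(d+2)$, exactly as you do. The one remark is that the step you flag as delicate is not actually needed: the tail bound with $v + Rt$ in the denominator is weaker than (hence implied by) the standard Tropp form with $v + Rt/3$, and is precisely the matrix Bernstein statement in Wainwright (Theorem 6.17), so no separate moment-generating-function argument is required.
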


\newpage
\section{Proof of Theorem \ref{thm:test-loss-ltm}}
\label{sec:lin-proof}
In this section, we present the proof of Theorem \ref{thm:test-loss-ltm}.
In Section \ref{sec:lin-decomp-detail}, we provide the detail of the decomposition of the risk into $T_1$ and $T_2$.
Then in Section \ref{sec:upb-t1} we compute an upper bound for $T_1$, and compute an upper bound for $T_2$ in Section \ref{sec:upd-t2}.
Finally, we combine everything together in Section \ref{sec:final-thm} and completes the proof.

\subsection{Decomposition of the test risk}
\label{sec:lin-decomp-detail}
\begin{equation}
\begin{aligned}
&~~~~~\E  \left[f^{\alg}(\bxt)-f_\star(\bxt)\right]^2 
    = \E  \left[f_n(\bxt)+r_n(\bxt)-f_\star(\bxt)\right]^2,\\
    &= \E  \left[ f_n(\bxt) - f_\star(\bxt)
                      -f_n(\bxnn) + f_\star(\bxnn)\right]^2,\\
    &= \E  \left[ f_n(\bxt)
                      -f_\infty(\bxt) + f_\infty(\bxt)
                      -f_\star(\bxt)
                      -f_n(\bxnn)
                      +f_\infty(\bxnn) - f_\infty(\bxnn)
                      +f_\star(\bxnn)
                 \right]^2,\\
    &\leq 3\times [ \underbrace{\E  (f_n(\bxt)-f_\infty(\bxt))^2
                        +\E  (f_n(\bxnn)-f_\infty(\bxnn))^2}_{T_1} 
                    +\underbrace{\E  (f_\infty(\bxt)
                       -f_\star(\bxt)- f_\infty(\bxnn)
                       +f_\star(\bxnn))^2}_{T_2}
                  ],
\end{aligned}
\end{equation}
where in the last inequality, we used the fact that $(a+b+c)^2<3(a^2+b^2+c^2)$ for any $a, b, c\in\R$.

\ActivateWarningFilters[pdftoc]
\subsection{Upper bound on $T_1$.}
\DeactivateWarningFilters[pdftoc]

\label{sec:upb-t1}
\noindent Since $\P_\bx = \text{Unif}(\cB_{\bzero, B}~ )$, we apply the bound $\|\bxt\|, \|\bxnn\|\leq B$ to obtain
\begin{equation}
\begin{aligned}
T_1
&= \E [f_n(\bxt) - f_\infty (\bxt)]^2 + \E [f_n(\bxnn) - f_\infty(\bxnn)]^2, \\
&= \E \<\btheta_n-\btheta_\infty, \bxt\>^2 + \E \<\btheta_n - \btheta_\infty, \bxnn\>^2, \\
&\leq \E\|\btheta_n-\btheta_\infty\|^2 \|\bxt\|^2 + \E \|\btheta_n-\btheta_\infty\|^2 \|\bxnn\|^2,\\
&\leq 2B^2\E\|\btheta_n - \btheta_\infty\|^2.
\end{aligned}
\end{equation}

As $n$ gets large, the empirical covariance matrix $\bSigma_n = \bX^\sT \bX/n$ is concentrated around its mean $\bI$.
Let $\bDelta_n = \bI - \bSigma_n$ denote this deviation.
For some $\eps \in (0, 1)$, define the following ``good event'' over the randomness in $\bSigma_n$
\begin{equation}
\cA = \{\|\bDelta_n\|_{\rm op} <\eps\},
\end{equation}

where $\|\bDelta_n\|_{\rm op}$ denotes the operator norm of the deviation matrix.
The high level idea of the proof is to condition on the event $\cA$ and deduce and upper bound of $\|\btheta_n-\btheta_\infty\|$ in terms of $\eps$.
Then, we use the fact that $\cA$ happens with high probability.

Recall that $\btheta_\infty = L \btheta_\star$, and 
\begin{equation}
\btheta_n = \underset{\|\btheta\|\leq L}{\operatorname{argmin}} \, \frac{1}{n} \|\bX\btheta-\by\|^2.
\end{equation}

Since $\by = \bX\btheta_\star$ by definition,
the Lagrangian of the convex program above is
\begin{equation}
\mathcal{L}(\btheta, \lambda) = \frac{1}{n} \|\bX\btheta-\bX\btheta_\star\|^2 + \lambda (\|\btheta\|^2 - L).
\end{equation}

The KKT condition suggests that the primal-dual optimal pair $(\btheta_n, \lambda_n)$ is given by
\begin{equation}
\begin{aligned}
\|\btheta_n\|&\leq L ,\\
\lambda_n    &\geq 0 ,\\
\lambda_n(\|\btheta_n\|-L)&=0 ,\\
\end{aligned}
\end{equation}

and at optimality
\begin{equation}
\begin{aligned}
\grad_\btheta \mathcal{L}(\btheta_n, \lambda_n) = 0 &\iff \frac{2}{n}\bX^\sT\bX(\btheta-\btheta_\star) + 2\lambda_n \btheta = 0,  \\
&\iff \btheta_n = (\bSigma_n+\lambda_n \bI)^{-1}\bSigma_n\btheta_\star.\\
\end{aligned}
\end{equation}

The complementary slackness condition 
$\lambda_n(\|\btheta_n\|-L)=0$ suggests that either $\lambda_n=0$ or $\|\btheta_n\|=L$.
But if $\lambda_n = 0$, the stationary condition $\grad_\btheta \mathcal{L}(\btheta, \lambda) = 0$ would suggest that 
$\btheta_n = \bSigma_n^{-1}\bSigma_n\btheta_\star = \btheta_\star \Rightarrow \|\btheta_n\|=1>L,$ a contradiction.
(Note that here $\bSigma_n$ is invertible condition on the event $\cA$.)
Therefore, we must have $\|\btheta_n\|=L$. 
As a result, the primal and dual pair $(\btheta_n, \lambda_n)$ is determined by the system of equations
\begin{equation}
\begin{cases}
\btheta_n &= (\bSigma_n + \lambda_n \bI)^{-1} \bSigma_n \btheta_\star, \\
\|\btheta_n\| &= L, \\
\lambda_n &> 0.
\end{cases}
\end{equation}
Next, we proceed to compute the deviation $\|\btheta_n - \btheta_\infty\|$.
\begin{equation}
\begin{aligned}
\btheta_n
&= \left[ (\lambda_n+1)\bI - \bDelta_n\right]^{-1} \bSigma_n \btheta_\star, \\
&= (\lambda_n+1)^{-1} \left[ \bI - \frac{\bDelta_n}{\lambda_n+1}\right]^{-1}\bSigma_n \btheta_\star, \\
&= (\lambda_n+1)^{-1} \left[ \bI + \sum_{k=1}^{\infty} \frac{\bDelta_n^k}{(\lambda_n+1)^k}\right](\bI-\bDelta_n) \btheta_\star,\\
&=(\lambda_n+1)^{-1} \left[ \bI + \sum_{k=1}^\infty \frac{\bDelta_n^k}{(\lambda_n+1)^k} - \bDelta_n - \sum_{k=1}^\infty \frac{\bDelta_n^{k+1}}{(\lambda_n+1)^k} \right]\btheta_\star,\\
&=(\lambda_n+1)^{-1}\btheta_\star + (\lambda_n+1)^{-1} \bDelta_n \left[\sum_{k=1}^\infty \frac{\bDelta_n^{k-1}}{(\lambda_n+1)^k} - \bI - \sum_{k=1}^\infty \frac{\bDelta_n^{k}}{(\lambda_n+1)^k} \right]\btheta_\star,\\
&= (\lambda_n+1)^{-1}\btheta_\star + (\lambda_n+1)^{-1} \bDelta_n\left[\sum_{k=1}^{\infty} \frac{\bDelta_n^{k-1}-\bDelta_n^k}{(\lambda_n+1)^k} - \bI\right]\btheta_\star.
\end{aligned}
\end{equation}

Define
\begin{equation}
\bD_n = \bDelta_n\left[\sum_{k=1}^{\infty} \frac{\bDelta_n^{k-1}-\bDelta_n^k}{(\lambda_n+1)^k} - \bI\right].
\end{equation}

Then $\btheta_n = (\lambda_n+1)^{-1}\btheta_\star + (\lambda_n+1)^{-1}\bD_n\btheta_\star$, and
\begin{equation}
\begin{aligned}
\|\bD_n\| &\leq \|\bDelta_n\| \left[1+\sum_{k=1}^{\infty} \frac{\|\bDelta_n\|^{k-1}+\|\bDelta_n\|^k}{(\lambda_n+1)^k}\right] ,\\
&\leq \eps \left[1+2(1+\lambda_n)^{-1}\sum_{k=1}^\infty \left(\frac{\eps}{1+\lambda_n}\right)^k\right],\\
&= \eps\left(1 + \frac{2}{1+\lambda_n} \frac{1}{1 - \frac{\eps}{1+\lambda_n}}\right) \leq 3\eps.
\end{aligned}
\end{equation}

Therefore
\begin{equation}
\begin{aligned}
&L=\|\btheta_n\|^2 = (\lambda_n+1)^{-2} + (\lambda_n+1)^{-2} \btheta_\star^\sT \bD_n^\sT \bD_n \btheta_\star + 2(\lambda_n+1)^{-2} \btheta_\star \bD_n \btheta_\star, \\
\Rightarrow & (\lambda_n+1)^2L^2 = 1 + \delta_n,~ \delta_n = \btheta_\star^\sT\bD_n^\sT\bD_n\btheta_\star + 2\btheta_\star^\sT\bD_n\btheta_\star.
\end{aligned}
\end{equation}

We can obtain the following bound for $\delta_n$:
\begin{equation}
|\delta_n| \leq \|\btheta_\star\|^2 \|\bD_n\|^2 + 2\|\btheta_\star\|^2 \|\bD_n\| \leq 9\eps^2 + 6\eps \leq 15\eps.
\end{equation}

Since $1-\delta_n/2\leq \sqrt{1+\delta_n}\leq 1+\delta_n/2$, and $|\delta_n|\leq 15\eps$, we obtain
\begin{equation}
|(\lambda_n+1)L-1|\leq \frac{15\eps}{2} \Rightarrow \left|L-(\lambda_n+1)^{-1}\right|\leq \frac{15\eps}{2}(\lambda_n+1)^{-1} \leq \frac{15\eps}{2},
\end{equation}

where the last inequality follows as we have $\lambda_n > 0$.
Finally,
\begin{equation}
\begin{aligned}
\btheta_n -\btheta_\infty~~~ &= (\lambda_n+1)^{-1}\btheta_\star - L \btheta_\star + (\lambda_n+1)^{-1}\bD_n\btheta_\star,\\
\Rightarrow \|\btheta_n-\btheta_\infty\|^2 &= [(1+\lambda_n)^{-1}-L]^2 + (1+\lambda_n)^{-2} \btheta_\star \bD_n^\sT\bD_n\btheta_\star + 2(\lambda_n+1)^{-1}[(1+\lambda_n)^{-1}-L] \btheta_\star \bD_n \btheta_\star,\\
&\leq 64\eps^2 + 9\eps^2 + 45\eps^2 = 118\eps^2, \\
\Rightarrow \|\btheta_n - \btheta_\infty\|^2 &\lesssim \eps^2.
\end{aligned}
\end{equation}

Combine the above result with Proposition \ref{prop:matcon}, we get that

\begin{equation}
\begin{aligned}
\E\|\btheta_n - \btheta_\infty\|^2 &= \E (\|\btheta_n - \btheta_\infty\|^2|\cA)\P(\cA) + \E(\|\btheta_n - \btheta_\infty\|^2|\cA^c)\P(\cA^c), \\
&\leq \eps^2 + 4L^2 \times 4d\exp\left[ -\frac{n\eps^2}{2(d+2)(1+\eps)} \right], \\
\end{aligned}
\end{equation}

If we choose $\eps = n^{-1/3}$, we get
\begin{equation}
\E\|\btheta_n - \btheta_\infty\|^2 \lesssim dL^2n^{-2/3},
\end{equation}

which implies that
\begin{equation}
T_1 \lesssim d^2L^2n^{-2/3}.
\end{equation}

\ActivateWarningFilters[pdftoc]
\subsection{Upper bound on $T_2$.}
\DeactivateWarningFilters[pdftoc]

\label{sec:upd-t2}
Plugging in the formula for $f_\perp(\bxt) = f_\star(\bxt)-f_\infty(\bxt) = \<\bxt, \btheta_\perp\>$, we get
\begin{equation}
\begin{aligned}
  T_2 &= \E [f_\perp(\bxnn)-f_\perp(\bxt)]^2,\\
  &= \E\<\btheta_\perp, \bxnn-\bxt\>^2,\\
  &\leq (1-L)^2\|\btheta_\star\|^2 \E \|\bxt-\bxnn\|^2,\\
  &= (1-L)^2 \E \|\bxt-\bxnn\|^2,
\end{aligned}
\end{equation}

where in the last inequality, we used the relation that $\btheta_\perp = (1-L)\btheta_\star$.
Proposition \ref{lem:nearestneighbor} suggests that
\begin{equation}
\E \|\bxt-\bxnn\|^2 \lesssim d^2 \left[\frac{\log\left(n^{1/d}\right)}{n}\right]^{1/d},
\end{equation}

which implies 
\begin{equation}
T_2 \lesssim d^2(1-L)^2  \left[\frac{\log\left(n^{1/d}\right)}{n}\right]^{1/d}.
\end{equation}

\begin{remark}[Comparison with pure nearest neighbor and ERM]
If we rely solely on nearest neighbor method, the prediction error is 
\begin{equation}
  \E[f_\star(\bxt)-f_\star(\bxnn)]^2 = \E \<\bxt-\bxnn, \btheta_\star\>^2 \leq \E \|\bxt-\bxnn\|^2.
\end{equation}

On the other hand, if we solely rely on ERM, even with infinite sample, we get
\begin{equation}
  \E[f_\star(\bxt)-f_\infty(\bxt)]^2 = \E \<\bxt, \btheta_\star-\btheta_\infty\>^2 \leq (1-L)^2\E \|\bxt\|^2.
\end{equation}

We can see from the upper bound that $\alg$ takes advantage of both 
\begin{itemize}
    \item Projecting $f_\star$ onto $f_\infty$, so that the dependence on the prediction function is reduced from $1$ to $(1-L)^2$.
    \item Memorizing the residuals using nearest neighbor, so that the variance is reduced from $\E\|\bxt\|^2$ to $\E\|\bxnn-\bxt\|^2$.
\end{itemize}
\end{remark}

\ActivateWarningFilters[pdftoc]
\subsection{Test loss for $\alg$.}
\DeactivateWarningFilters[pdftoc]
\label{sec:final-thm}

If we combine the previous two parts together, we get
\begin{equation}
\begin{aligned}
\E \left[\hat{f}(\bxt) - f_\star(\bxt)\right]^2 &\lesssim d^2L^2n^{-2/3} + d^2(1-L)^2  \left[\frac{\log\left(n^{1/d}\right)}{n}\right]^{1/d}.
\end{aligned}
\end{equation}

\noindent This completes the proof of Theorem~\ref{thm:test-loss-ltm}.

\newpage

\section{Additional CIFAR100 Results}
\label{sec:add-cifar}
This section includes additional experiment results on applying ResMem to CIFAR100 dataset.
\subsection{Additional robustness results}
In addition to the results already presented in Section \ref{sec:exp-vision}, we also evaluate ResMem performance for each architecture in CIFAR-ResNet\{8, 14, 20, 32, 44, 56\} and each subset (10\%, 20\%, ..., 100\%) of CIFAR100 training data.
We use the same training hyperparameter and the ResMem hyperparameter as described in Section \ref{sec:exp-vision}.
Generally, we see that ResMem yields larger improvement over the baseline DeepNet when the network is small and dataset is large.
\begin{figure*}[ht]
    \centering
    \subfigure[\label{fig:cifar-sample-resnet8-appendix}CIFAR-ResNet-8]{
    \includegraphics[width=.24\textwidth]{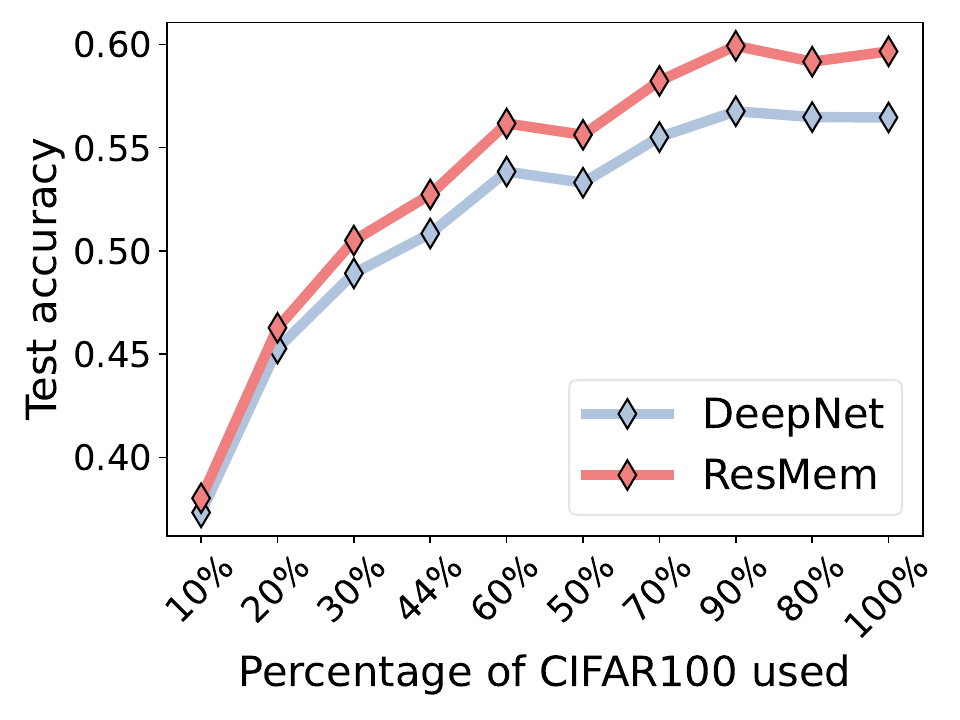}
    \includegraphics[width=.24\textwidth]{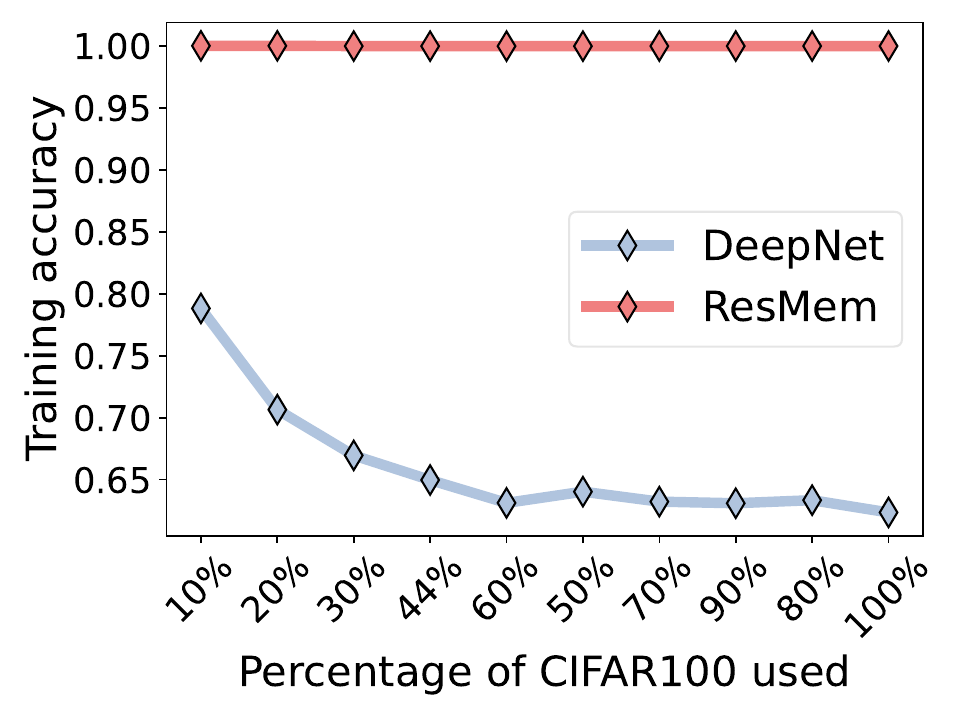}}
    \subfigure[\label{fig:cifar-sample-resnet14}CIFAR-ResNet-14]{
    \includegraphics[width=.24\textwidth]{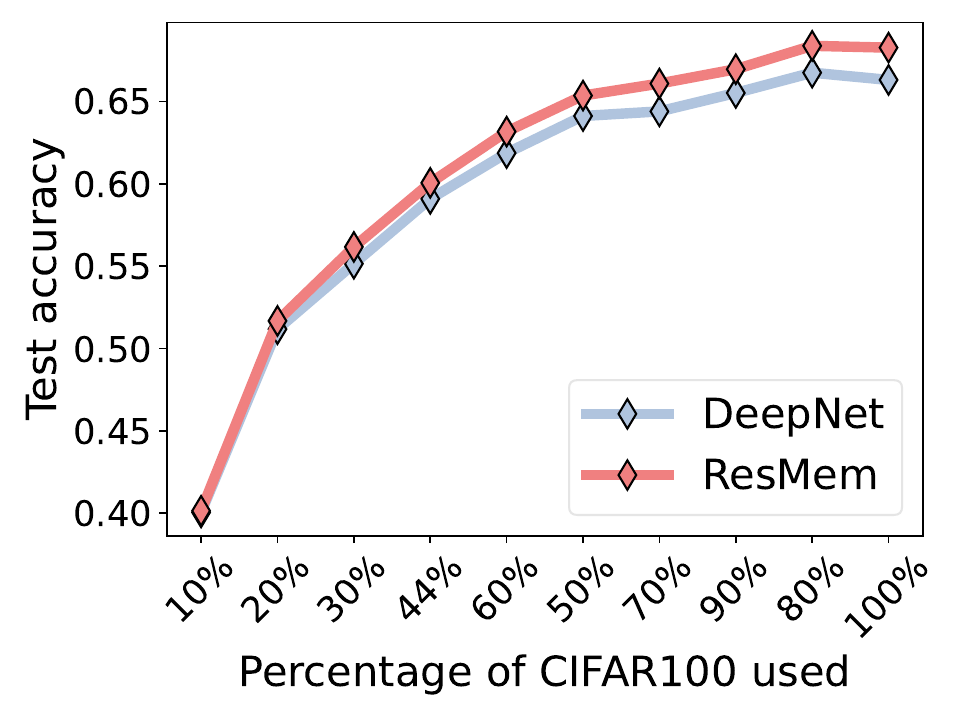}
    \includegraphics[width=.24\textwidth]{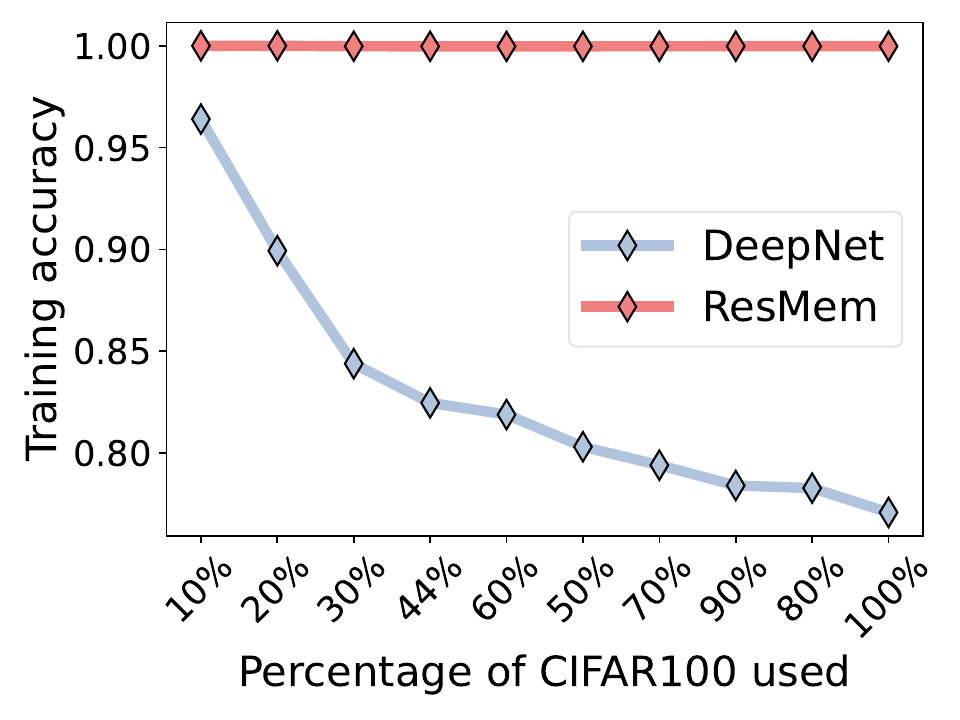}}
    \subfigure[\label{fig:cifar-sample-resnet20}CIFAR-ResNet-20]{
    \includegraphics[width=.24\textwidth]{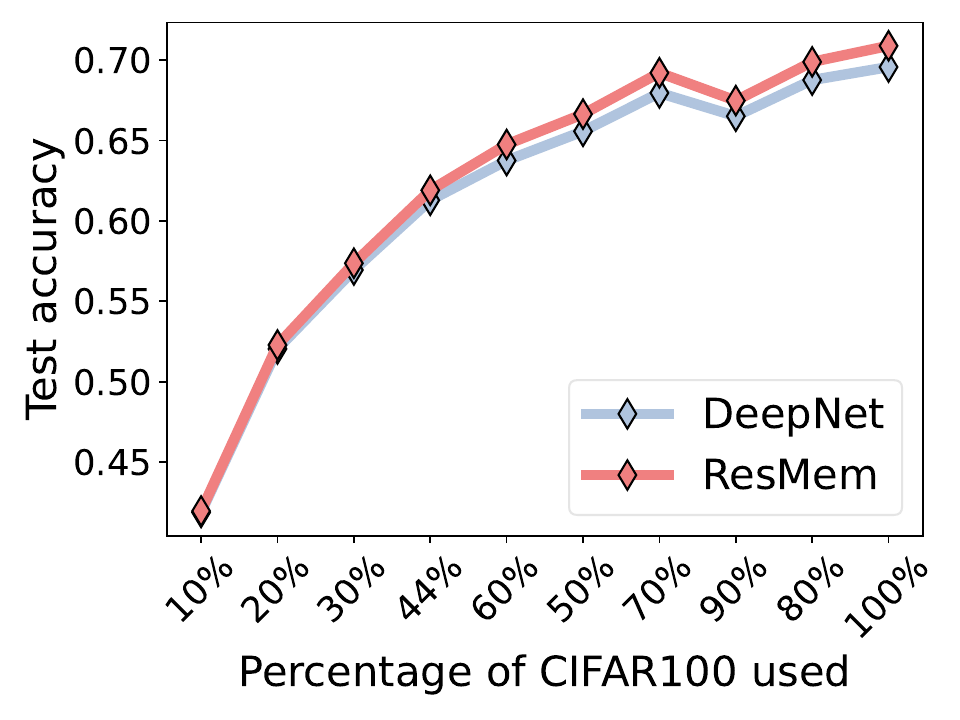}
    \includegraphics[width=.24\textwidth]{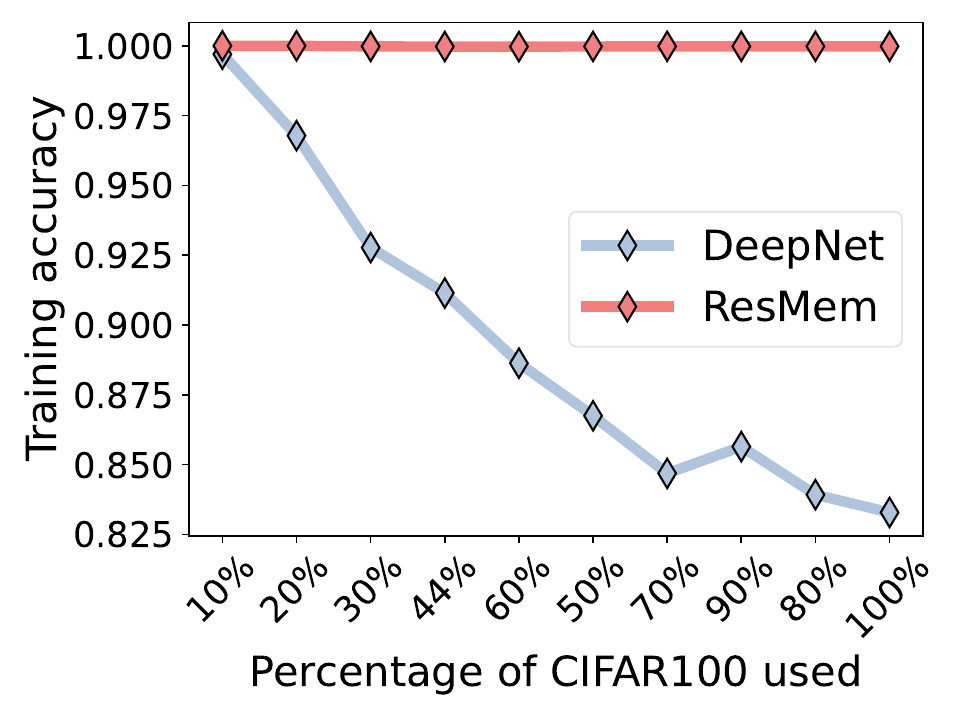}}
    \subfigure[\label{fig:cifar-sample-resnet32}CIFAR-ResNet-32]{
    \includegraphics[width=.24\textwidth]{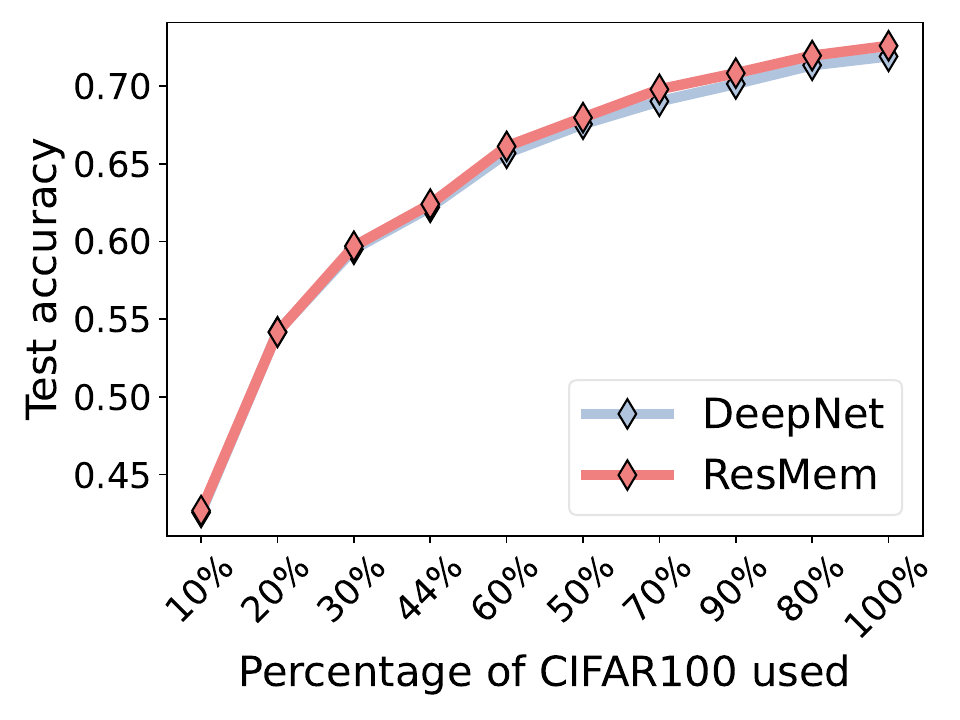}
    \includegraphics[width=.24\textwidth]{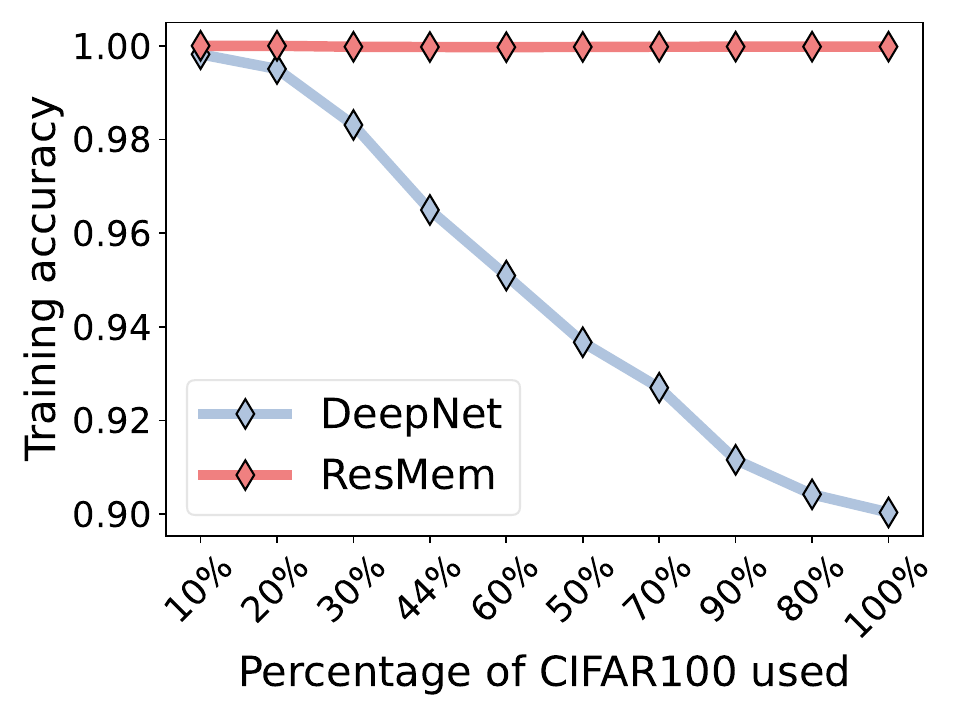}}
    \subfigure[\label{fig:cifar-sample-resnet44}CIFAR-ResNet-44]{
    \includegraphics[width=.24\textwidth]{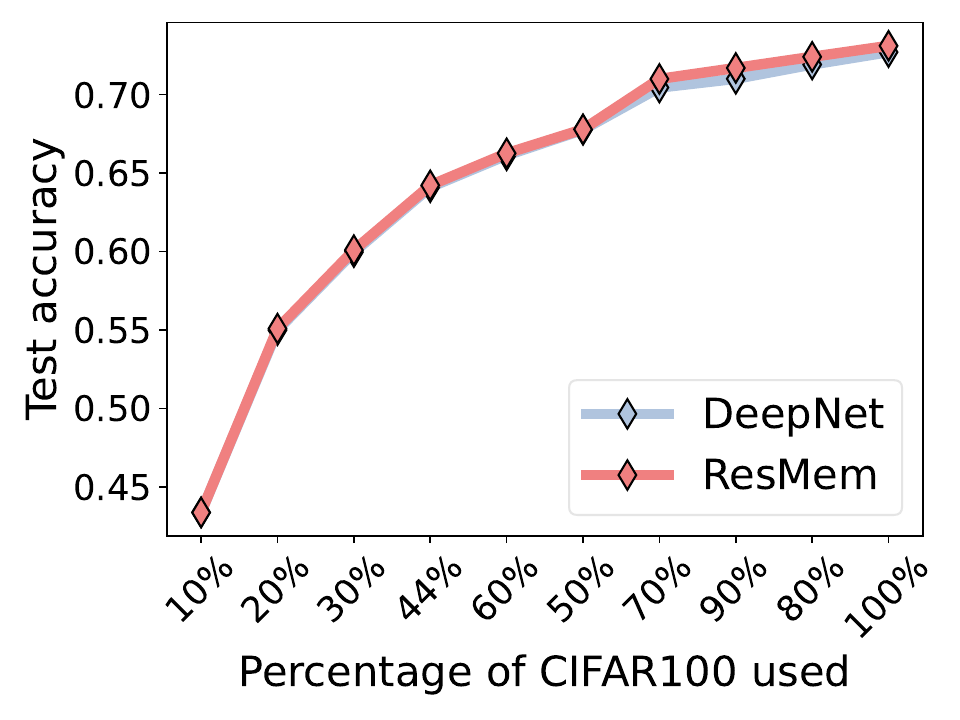}
    \includegraphics[width=.24\textwidth]{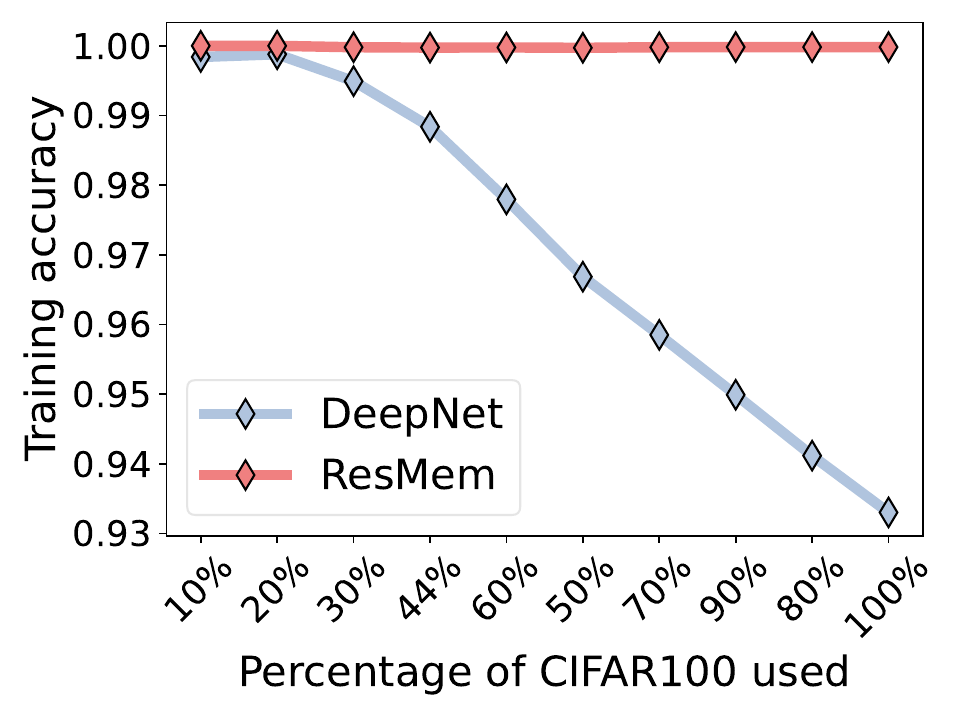}}
    \subfigure[\label{fig:cifar-sample-resnet56}CIFAR-ResNet-56]{
    \includegraphics[width=.24\textwidth]{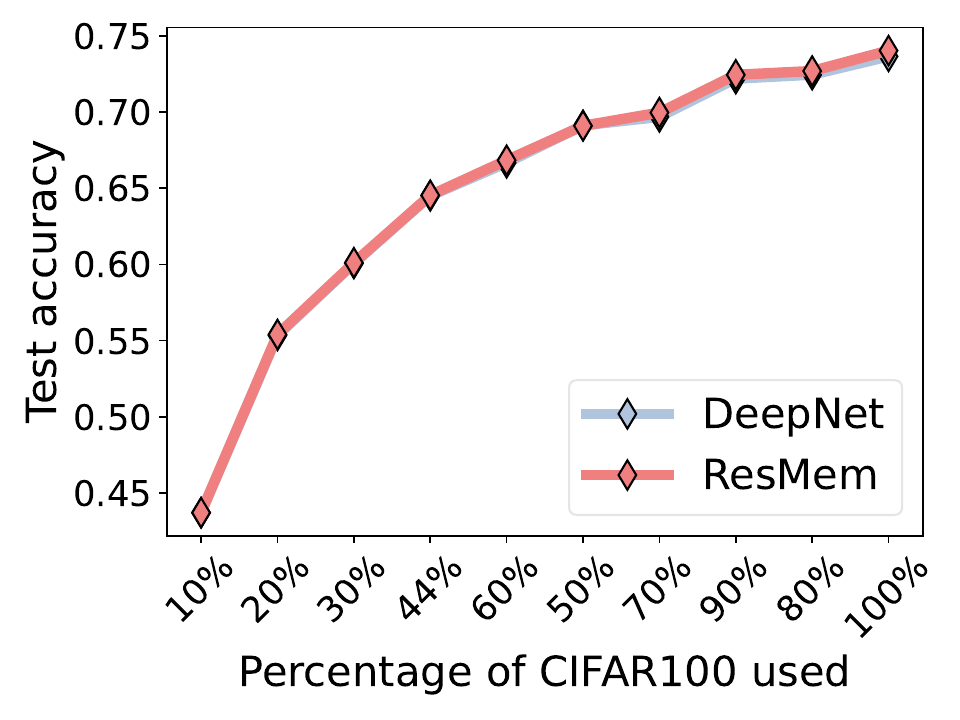}
    \includegraphics[width=.24\textwidth]{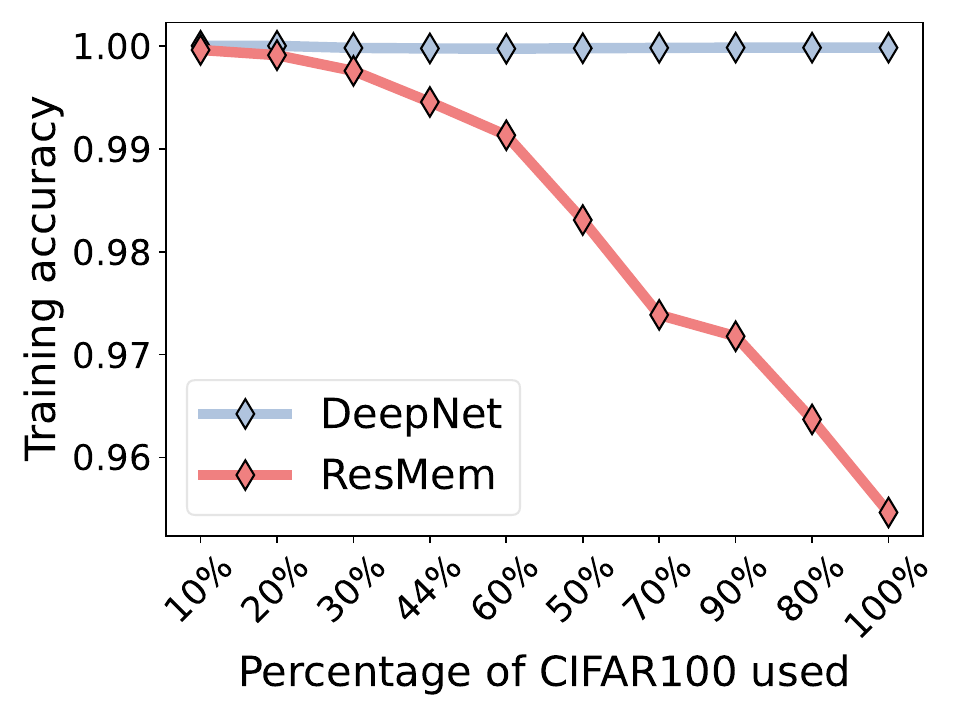}}
    \caption{Test(left)/Training (right) accuracy for different sample sizes.}
    \label{fig:cifar-all}
\end{figure*}

\subsection{Sensitivity analysis for CIFAR100}
\label{sec:sensitivity-cifar}
\begin{figure*}[ht]
    \centering
    \subfigure[\label{fig:cifar-k}\# of neighbours $k$.]{\includegraphics[width=.32\textwidth]{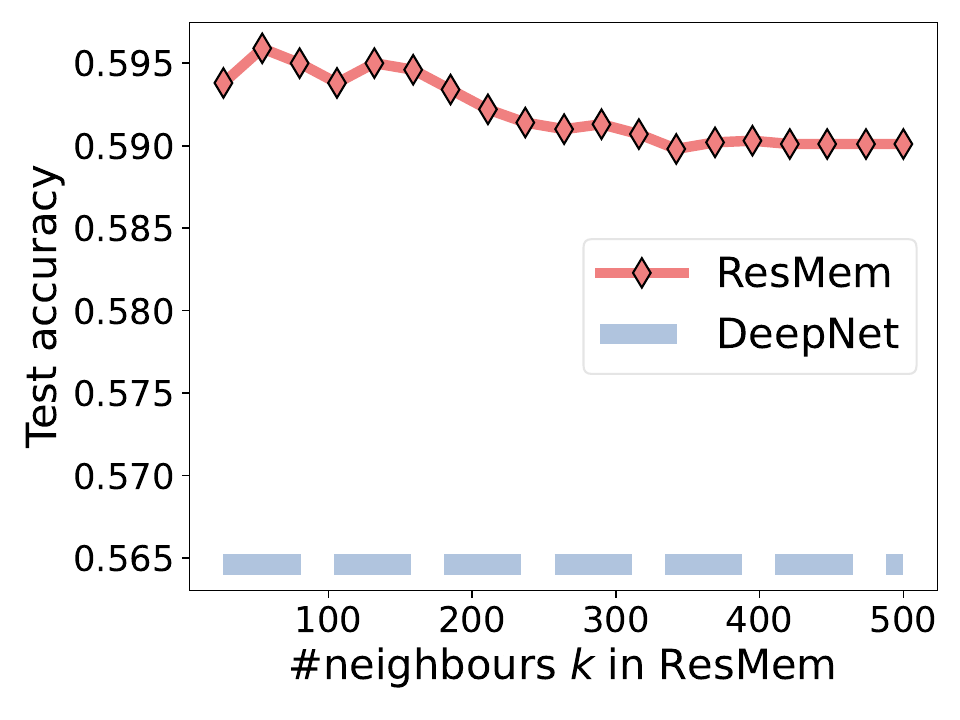}}
    \subfigure[\label{fig:cifar-sigma}Radius parameter $\sigma$.]{\includegraphics[width=.32\textwidth]{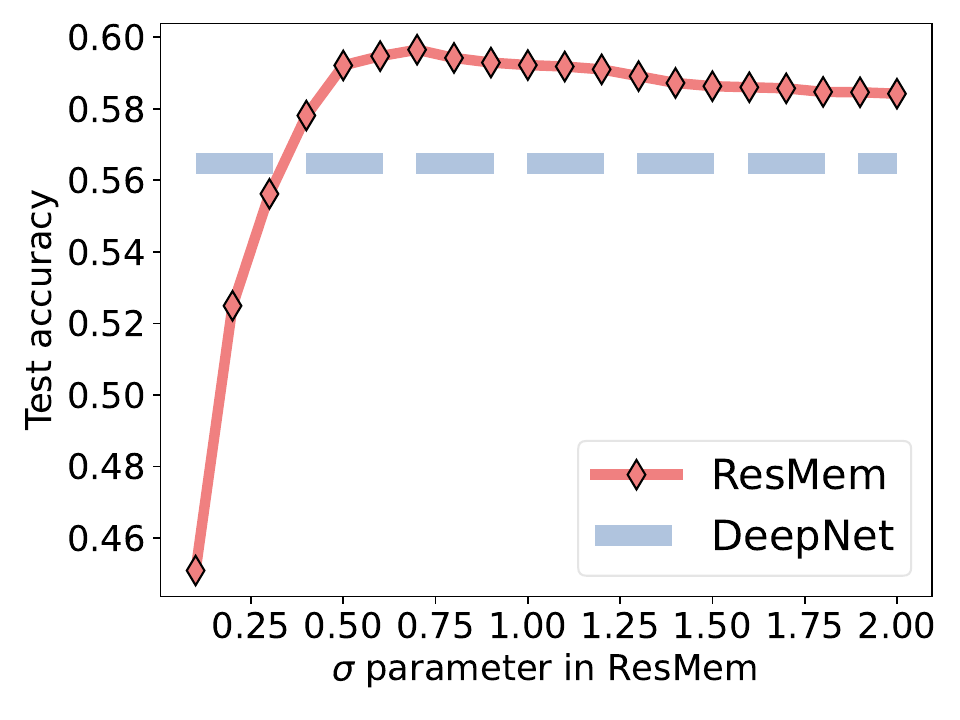}}
    \subfigure[\label{fig:cifar-temp}Temperature $T$.]{\includegraphics[width=.32\textwidth]{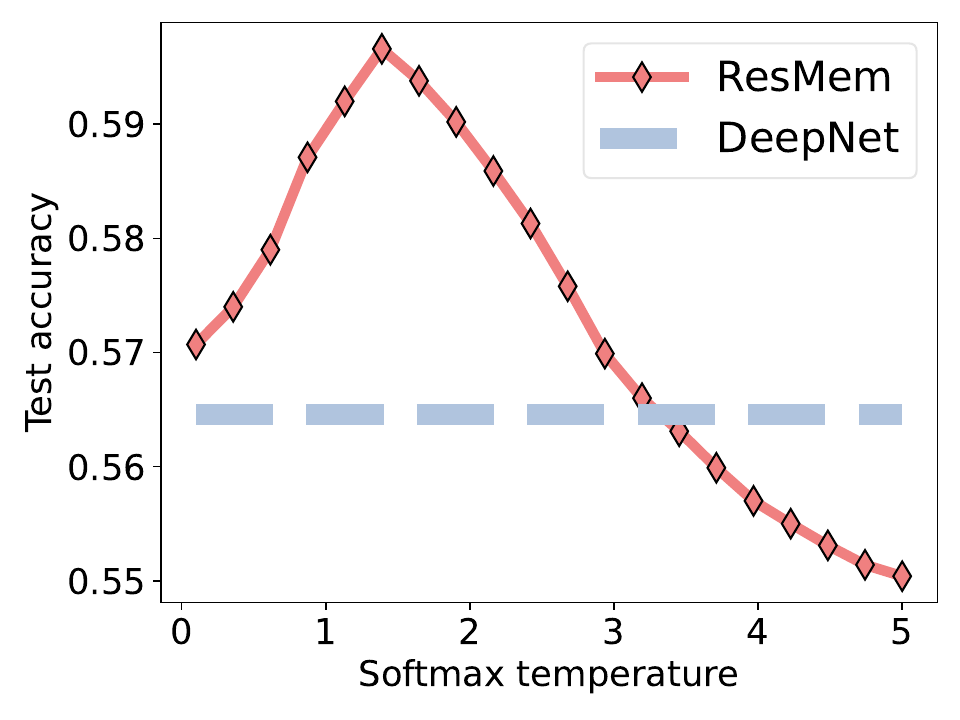}}
    \caption{
    Sensitivity analysis of ResMem hyperparameters.
    The $y$-axis represents the CIFAR100 test accuracies,
    and the
    $x$-axis represents the sweeping of respective hyperparameters.}
    \label{fig:robustness-hyper}
\end{figure*}

\paragraph{Varying locality parameter $k$ and $\sigma$.}
We vary the number of neighbours from $k=27$ to $k=500$.
We find that  ResMem test accuracy is relatively stable across the choice of the number of neighbours (cf.~Figure \ref{fig:cifar-k}).
The trend of the curve suggests that as $k\rightarrow\infty$, the ResMem test accuracy seems to be converging to a constant level.
For $\sigma$, we explored different values of $\sigma \in (0.1, 2.0)$. We observe that the test accuracy has a unimodal shape as a function of $\sigma$, suggesting that there is an optimal choice of $\sigma$ (cf.~Figure \ref{fig:cifar-sigma}).

\paragraph{Varying temperature $T$ and connection to distillation.}
We tried $T=0.1$ to $T=5$, and also identified an unimodal shape for the test accuracy (Figure \ref{fig:cifar-temp}).
The fact that we can use different temperatures for (a) training the network and (b) constructing the $k$-NN predictor reminds us of the well-established knowledge distillation  procedure \citep{distillation}.
In knowledge distillation, we first use one model (the teacher network) to generate targets at a higher temperature, and then train a second model (the student network) using the \emph{combination} of the true labels and the output of the first network.

ResMem operates in a reversed direction:
Here we have a second model ($\knn$) that learns the \emph{difference} between true labels and the output of the first model.
In both cases, we can tune the temperature of the first model to control how much information is retained.
This connection offers an alternative perspective that regards ResMem as a ``dual procedure'' to knowledge distillation.

\section{ResMem on ImageNet}
\label{sec:imagenet}
This section includes additional experiment results on applying ResMem to ImageNet dataset.

\paragraph{ImageNet.}
In addition to CIFAR100, we also evaluate the performance of ResMem on ImageNet~\citep{ILSVRC15}. 
We employ a family of pre-trained MobileNet-V2 models~\citep{Sandler:2018}
from Keras\footnote{\url{https://keras.io/api/applications/mobilenet/}},
with varying widths controlled by a multiplier $a$.
For ResMem, we again use the second last layer of DeepNet as a 1280-dimensional embedding of an image and rely on the $\ell_2$ distance between the embeddings for nearest neighbor search (Step 3, Section \ref{sec:background}).
We specify the ResMem parameter of $(k, \sigma, T)$ in the table below.
We repeat the experiment over several MobileNet-V2 architectures, with MobileNet-V2-a0.35 being the smallest model and MobileNet-V2-a1.3 being the largest one.

\begin{table}[ht]
\centering
\caption{Test accuracy for ResMem and baseline deep network for ImageNet data.}
\label{table:imagenet}
\vskip 0.1in
\scalebox{1}{
\begin{tabular}{@{}cccccc@{}} 
\toprule
\multirow{2}{*}{Architecture}  & \multicolumn{3}{c}{ResMem param.} & \multicolumn{2}{c}{Test accuracy} \\
\cmidrule(l){2-4}
\cmidrule(l){5-6}
&  $k$ & $\sigma$ & $T$ &\nn  & \alg \\
\toprule
MobileNet-V2-a0.35 & 10 & 0.6 & 0.4 & 60.2\%  & \textbf{61.2\%} \\
MobileNet-V2-a0.5 & 10 & 0.6 & 0.4 & 65.3\%  & \textbf{66.1\%} \\
MobileNet-V2-a0.75 & 10 & 0.8 & 0.6 & 69.6\%  & \textbf{70.1\%} \\
MobileNet-V2-a1.0 & 20 & 0.4 & 0.4 & 71.3\%  & \textbf{71.8\%} \\
MobileNet-V2-a1.3 & 30 & 0.4 & 0.4 & 74.7\%  & \textbf{75.1\%} \\
\toprule
\end{tabular}
}
\end{table}
We can see that (c.f. Table \ref{table:imagenet}) ResMem boosts the test accuracy by $1\%$ on the smallest model and by $0.4\%$ on the largest model.

\section{Additional details of NLP experiments}
\label{sec:detail-nlp}

The Decoder-Only model used in our experiments is essentially the normal Encoder-Decoder architecture with Encoder and Cross-Attention removed.
We pretrained both the T5-small and T5-base model on C4~\citep{2020t5} dataset with auto-regressive language modeling task for 1,000,000 steps, with dropout rate of 0.1 and batch size of 128.
The learning rate for the first 10,000 steps is fixed to 0.01 and the rest steps follow a square root decay schedule.

During the inference for retrieval key, query embeddings and residuals, we ensured every token has at least 64 preceding context by adopting a sliding window strategy, where a window of 256 token slides from the beginning to the end on each of the articles, with a stride of $256 - 64 = 192$.

For residuals, we only stored the top 128 residuals measured by the absolute magnitude, as the residual vector is as large as T5 vocabulary size (i.e., 32128), and storing all 32128 residuals for each token is too demanding for storage. However, when weight-combining the residuals, we zero filled the missing residuals so that all the residual vectors have 32128 elements.

\section{Comparison with other algorithms}
\label{sec:knnlm}
We mainly compare ResMem against \cite{knnlm}, where the algorithm uses $k$NN to retrive labels directly instead of the residual of the label.
In their algorithm, a key aparameter is $\lambda\in[0, 1]$ which specifeis how much weight to give to the neural network and how much for the $k$NN component.
In the extreme case of $\lambda$=1, their algorithm reduces to using $k$NN to memorize data directly.

For the language modeling task, we use the C4 dataset and T5-large architecture.
As we change the weight \cite[Equation (3)]{knnlm} of the DeepNet component, we find the best performing kNN-LM methods has accuracy 44.88\% which is lower accuracy than the ResMem accuracy 45.55\%.
In particular, we obtain the table below

\begin{table}[h]
\centering
\caption{Test accuracy for kNN-LM (ResMem accuracy 45.55\%)}
\begin{tabular}{lcccccccc}
\toprule
kNN weight $\Lambda$ & 0 & 0.2 & 0.4 & 0.5 & 0.6 & 0.8 & 1 \\
\midrule
kNN-LM accuracy & 44.76\% & 44.88\% & 44.83\% & 44.66\% & 44.27\% & 42.97\% & 40.95\% \\
ResMem acc. - kNN-LM acc. & 0.79\% & 0.67\% & 0.72\% & 0.89\% & 1.28\% & 2.58\% & 4.60\% \\
\bottomrule
\end{tabular}
\end{table}

For image classification with CIFAR-ResNet-8, we run the simple baseline of using $k$-nearest neighbor to directly memorize the labels 
.
We observe the performance: we observe that pure DeepNet has accuracy 56.46\%; pure $k$NN memorization has accuracy 54.44\%; and ResMem has accuracy 59.66\%.


\end{document}